\def\eqref#1{equation~\ref{#1}}
\def\1{\bm{1}}
\DeclareMathAlphabet{\mathsfit}{\encodingdefault}{\sfdefault}{m}{sl}
\SetMathAlphabet{\mathsfit}{bold}{\encodingdefault}{\sfdefault}{bx}{n}
\def\gB{{\mathcal{B}}}
\def\gG{{\mathcal{G}}}
\def\gH{{\mathcal{H}}}
\def\gM{{\mathcal{M}}}
\def\gR{{\mathcal{R}}}
\def\gS{{\mathcal{S}}}
\def\gV{{\mathcal{V}}}
\def\gX{{\mathcal{X}}}
\def\gY{{\mathcal{Y}}}
\def\gZ{{\mathcal{Z}}}
\DeclareMathOperator*{\argmax}{arg\,max}
\newcommand{\borel}{\gB}
\newcommand{\bbR}{\mathbb{R}}
\newcommand{\bbE}{\mathbb{E}}
\def\condind{{\perp\!\!\!\perp}}
\def\equdist{\stackrel{\text{\rm\tiny d}}{=}}
\def\equdist{\stackrel{\text{\rm\tiny d}}{=}}
\def\equas{=_{\text{\rm\tiny a.s.}}}
\def\condind{{\perp\!\!\!\perp}}
\newcommand{\noise}{\xi}
\newcommand{\Unif}{\text{\rm Unif}}
\newcommand{\grp}{\mathcal{G}}
\newcommand{\argdot}{{\,\vcenter{\hbox{\scalebox{0.75}{$\bullet$}}}\,}}
\definecolor{darkblue}{rgb}{0, 0, 0.5}
\definecolor{beaublue}{rgb}{0.74, 0.83, 0.9}
\definecolor{gainsboro}{rgb}{0.86, 0.86, 0.86}
\definecolor{kleinblue}{rgb}{0,0.18,0.65}
\definecolor{Gray}{gray}{0.9}
\newtheorem{theorem}{Theorem}[section]
\newtheorem{proposition}[theorem]{Proposition}
\newtheorem{lemma}[theorem]{Lemma}
\newtheorem{definition}[theorem]{Definition}
\newtheorem{property}[theorem]{Property}
\newcommand{\ours}[0]{\texttt{INSET}\xspace} %
\title{Enhancing Neural Subset Selection: Integrating Background Information into Set  Representations
}
\author {
    % Authors
    Binghui Xie\textsuperscript{\rm 1},
    Yatao Bian\textsuperscript{\rm 2},
    Kaiwen zhou\textsuperscript{\rm 1},
    Yongqiang Chen\textsuperscript{\rm 1}\\
     \textsuperscript{\rm 1}The Chinese University of Hong Kong, 
     \textsuperscript{\rm 2}Tencent AI Lab, 
    \textsuperscript{\rm 3}Hong Kong Baptist University\\
    \texttt{\{bhxie21,kwzhou,yqchen,wei,jcheng\}@cse.cuhk.edu.hk}
    \vspace{-0.2in}
    \AND
    Peilin Zhao\textsuperscript{\rm 2},
    Bo Han\textsuperscript{\rm 3},
    Wei Meng\textsuperscript{\rm 1},
    James Cheng\textsuperscript{\rm 1} \\ \texttt{yatao.bian@gmail.com,masonzhao@tencent.com,bhanml@comp.hkbu.edu.hk}
}
\begin{document}

\maketitle

\begin{abstract}
Learning neural subset selection tasks, such as compound selection in AI-aided drug discovery, have become increasingly pivotal across diverse applications. The existing methodologies in the field primarily concentrate on constructing models that capture the relationship between utility function values and subsets within their respective supersets. However, these approaches tend to overlook the valuable information contained within the superset when utilizing neural networks to model set functions. In this work, we address this oversight by adopting a probabilistic perspective. Our theoretical findings demonstrate that when the target value is conditioned on both the input set and subset, it is essential to incorporate an \textit{invariant sufficient statistic} of the superset into the subset of interest for effective learning. 
This ensures that the output value remains invariant to permutations of the subset and its corresponding superset, enabling identification of the specific superset from which the subset originated. 
Motivated by these insights, we propose a simple yet effective information aggregation module designed to merge the representations of subsets and supersets from a permutation invariance perspective. 
% Through extensive empirical studies conducted on various tasks and datasets, we demonstrate the superior performance of our methods compared to baseline approaches, which attest to the effectiveness and practical applicability of our proposed techniques in real-world scenarios.
Comprehensive empirical evaluations across diverse tasks and datasets validate the enhanced efficacy of our approach over conventional methods, underscoring the practicality and potency of our proposed strategies in real-world contexts.
\end{abstract}

\section{Introduction}
The prediction of set-valued outputs plays a crucial role in various real-world applications. For instance, anomaly detection involves identifying outliers from a majority of data \citep{zhang2020set}, and compound selection in drug discovery aims to extract the most effective compounds from a given compound database \citep{gimeno2019light}. In these applications, there exists an implicit learning of a set function \citep{rezatofighi2017deepsetnet,zaheer2017deep} that quantifies the utility of a given set input, where the highest utility value corresponds to the most desirable set output.

More formally, let's consider the compound selection task: given a compound database $V$, the goal is to select a subset of compounds $S^* \subseteq V$ that exhibit the highest utility. This utility can be modeled by a parameterized utility function $F_\theta (S;V)$, and the optimization criteria can be expressed as:
\begin{align} \label{prob_set_func_learning}
S^* = \argmax_{S \in 2^{V}} F_\theta (S; V).
\end{align}
One straightforward method is to explicitly model the utility by learning $U = F_\theta(S;V)$ using supervised data in the form of $\{(S_i, V_i), U_i\}_{i=1}^N,$ where $U_i$ represents the true utility value of subset $S_i$ given $V_i$. However, this training approach becomes prohibitively expensive due to the need for constructing a large amount of supervision signals \citep{DBLP:journals/siamcomp/BalcanH18}.

To address this limitation, another way is to solve Eq.\ref{prob_set_func_learning} with an implicit learning approach from a probabilistic perspective. Specifically, it is required to utilize data in the form of $\{(V_i, S_i^*)\}_{i=1}^N$, where $S_i^*$ represents the optimal subset corresponding to $V_i$. The goal is to estimate $\theta$ such that Eq.~\ref{prob_set_func_learning} holds for all possible $(V_i, S_i)$. During practical training, with limited data ${(S_i^*, V_i)}_{i=1}^N$ sampled from the underlying data distribution $P(S, V)$, the empirical log likelihood $\sum_{i=1}^N[\log p_\theta(S^*|V)]$ is maximized among all data pairs $\{S, V\},$ where $p_\theta(S|V)\propto F_\theta(S; V)$ for all $S \in  2^V$. To achieve this objective, \citet{ou2022learning} proposed to use a variational distribution $q(Y|S,V)$ to approximate the distribution of $P(S|V)$ within the variational inference framework, where $Y \in [0, 1]^{|V|}$ represents a set of $|V|$ independent Bernoulli distributions, representing the odds or probabilities of selecting element $i$ in an output subset $S.$ (More details can be found in Appendix~\ref{sec:app:objective}.)
Thus, the main challenge lies in characterizing the structure of neural networks capable of modeling hierarchical permutation invariant conditional distributions. These distributions should remain unchanged under any permutation of elements in $S$ and $V$ while capturing the interaction between them.

However, the lack of guiding principles for designing a framework to learn the permutation invariant conditional distribution $P(Y|S, V)$ or $F(S, V)$ has been a challenge in the literature. A commonly used approach in the literature involves employing an encoder to generate feature vectors for each element in $V$. These vectors are then fed into DeepSets \citep{zaheer2017deep}, using the corresponding supervised subset $S$, to learn the permutation invariant set function $F(S)$. 
% However, this process may potentially neglect the interaction between 
However, this procedure might overlook the interplay between $S$ and $V$, thereby reducing the expressive power of models. See Figure~\ref{fig:illustration} for an illustrative depiction of this concept.

\begin{figure*}[t]
 \vspace{-0.2in}
    \begin{center}
    \includegraphics[width=0.9\textwidth]{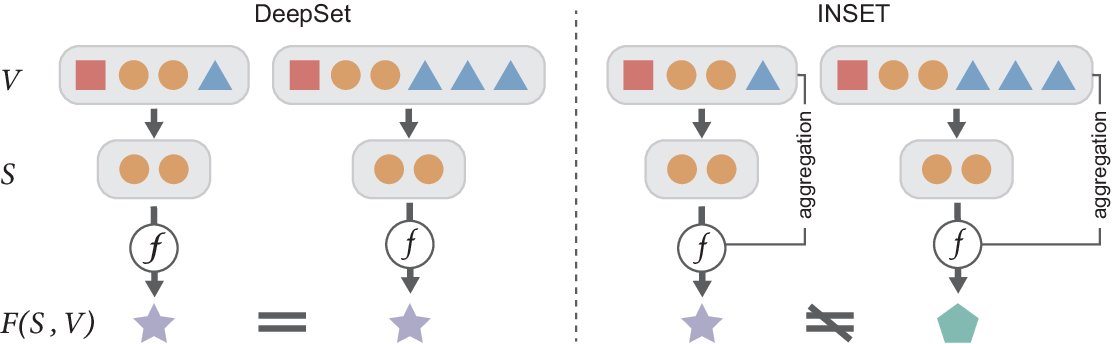}
    \caption{(\textbf{Left}): The DeepSet-style models only focus on processing the subset $S$.
(\textbf{Right}): In contrast, \ours not only identifies the subset $S$ but also takes the identification of $V$ into account, which parameterizes the information that $S$ is a subset of $V$ during the training process.}
     \label{fig:illustration}
     \end{center}
    \vspace{-0.3in}
\end{figure*}

To address these challenges, our research focuses on the aggregation of background information from the superset $V$ into the subset $S$ from a symmetric perspective. Initially, we describe the symmetry group of $(S,V)$ during neural subset selection, as outlined in Section~\ref{sec:symmetry}. Specifically, the subset $S$ is required to fulfill permutation symmetry, while the superset $V$ needs to satisfy a corresponding symmetry group within the nested sets scheme. We denote this hierarchical symmetry of $(S,V)$ as $\grp$. Subsequently, we theoretically investigate the connection between functional symmetry and probabilistic symmetry within $F(S,V)$ and $P(Y|S,V)$, indicating that the conditional distribution can be utilized to construct a neural network that processes the \textit{invariant sufficient representation} of $(S,V)$ with respect to $\grp$. These representations, defined in Section~\ref{sec:representation}, are proven to satisfy \textit{Sufficiency} and \textit{Adequacy}, which means such representations retain the information of the prediction $Y$ while disregarding the order of the elements in $S$ or $V$. Building upon the above theoretical results, we propose an interpretable and powerful model called \ours (Invariant Representation of Subsets) for neural subset selection in Section~\ref{sec:model}. \ours incorporates an information aggregation step between the invariant sufficient representations of $S$ and $V$, as illustrated in Figure~\ref{fig:illustration}. This ensures that the model's output can approximate the relationship between $Y$ and $(S,V)$ while being unaffected by the transformations of $\grp$. Furthermore, in contrast to previous works that often disregard the information embedded within the set $V$, our exceptional model (\ours) excels in identifying the superset $V$ from which the subset $S$ originates. 

In summary, we makes the following contributions. Firstly, we approach neural set selection from a symmetric perspective and establish the connection between functional symmetry and probabilistic symmetry in $P(Y|S,V)$, which enables us to characterize the model structure. Secondly, we introduce \ours, an effective and interpretable  model for neural subset selection. Lastly, we empirically validate the effectiveness of \ours through comprehensive experiments on diverse datasets, encompassing tasks such as product recommendation and set anomaly detection.
\section{Related Work}

\textbf{Encoding Interactions for Set Representations.}
Designing network architectures for set-structured input has emerged as a highly popular research topic. Several prominent works, including those \citep{ravanbakhsh2017equivariance,edwards2016towards,zaheer2017deep,DBLP:conf/cvpr/QiSMG17,pmlr-v119-horn20a,DBLP:journals/jmlr/Bloem-ReddyT20} 
have focused on constructing permutation equivariant models using standard feed-forward neural networks. These models demonstrate the ability to universally approximate continuous permutation-invariant functions through the utilization of set-pooling layers. However, existing approaches solely address the representation learning at the set level and overlook interactions within sub-levels, such as those between elements and subsets.

Motivated by this limitation, subsequent studies have proposed methods to incorporate richer interactions when modeling invariant set functions for different tasks. For instance, \citep{DBLP:conf/icml/LeeLKKCT19} introduced the use of self-attention mechanisms to process elements within input sets, naturally capturing pairwise interactions. \citet{murphy2018janossy} proposed Janossy pooling as a means to encode higher-order interactions within the pooling operation. Further improvements have been proposed by \citep{kim2021differentiable,li2020exchangeable}, among others. Additionally, \citet{bruno2021mini,willette2022universal} developed techniques to ensure Mini-Batch Consistency in set encoding, enabling the provable equivalence between mini-batch encodings and full set encodings by leveraging interactions. These studies emphasize the significance of incorporating interactions between different components.

\textbf{Information-Sharing in Neural Networks.}
In addition to set learning tasks, the interaction between different components holds significance across various data types and neural networks. Recent years have witnessed the development of several deep neural network-based methods that explore hierarchical structures. For Convolutional Neural Networks (CNNs), various hierarchical modules have been proposed by \citet{deng2014large, murthy2016deep, xiao2014error,chen2020exploring,ren2020cloud, ren2019multi} to address different image-related tasks. In the context of graph-based tasks, \citep{defferrard2016convolutional, cangea2018towards, gao2019graph,ying2018hierarchical,9009471,NEURIPS2018_e77dbaf6,jin2020hierarchical,han2022equivariant}, and others have put forth different methods to learn hierarchical representations. The focus of these works lies in capturing local information effectively and integrating it with global information.

However, the above works ignore the symmetry and expressive power in designing models. Motivated by this, \citet{DBLP:conf/icml/MaronLCF20,NEURIPS2020_Wang} proposed how to design linear equivariant and invariant layers for learning hierarchical symmetries to handle per-element symmetries. Moreover, there are some works proposed for different tasks considering symmetry and hierarchical structure, e.g., \citep{han2022equivariant,ganea2022independent}. Our method differs from previous work by focusing on generating a subset $S \in V$ as the final output, rather than output the entire set $V$. Besides, \ours embraces a probabilistic perspective, aligning with the nature of the Optimal Subset (OS) oracle.

\section{Method}
\subsection{Background}
Let's consider the ground set composed of $n$ elements, denoted as $x_i$, i.e., $V=\{x_1, x_2,..., x_n\}$. In order to facilitate the proposition of Property~\ref{property_inv}, we describe $V$ as a collection of several disjoint subsets, specifically $V = \{S_1, \dots, S_m\}$, where $S_i \in \mathbb{R}^{n_i \times d}$. Here, $n_i$ represents the size of subset $S_i$, and each element $x_i \in \gX$ is represented by a $d$-dimensional vector. It is worth noting that, without loss of generality, we can treat $S_i$ as individual elements, i.e., $n_i = 1$. As an example of neural subset selection, the task involves encoding subsets $S_i$ into representative vectors to predict the corresponding function value $Y \in \gY$, as discussed in the introduction section. Existing methods such as \cite{zaheer2017deep} and \cite{ou2022learning} directly select $S_i$ from the encoding embeddings of all elements in $V$, and then input $S_i$ into feed-forward networks. However, these methods approximate the function $F(S_i, V)$ using only the explicit subsets $S_i$, which can be suboptimal since the function also relies on information from the ground set $V$. Furthermore, this approach leads to a conditional distribution $P(Y|S)$ instead of the desired $P(Y|S,V)$. Throughout this study, we assume that all random variables take values in standard Borel spaces, and all introduced maps are measurable.

In this section, we introduce a principled approach for encoding subset representations that leverages background information from the entire input set $V$ to achieve better performance. Additionally, our theoretical results naturally align with the task of neural subset selection in the OS Oracle, as they focus on investigating the probabilistic relationship between $Y$ and $(S,V),$ which also establishes a connection between the conditional distribution and the functional representation of both $S$ and $V$. By linking the functional representation to the conditional distribution, our results also provide insights into constructing a neural network that effectively approximates the desired function $F(S,V)$.

\subsection{The Symmetric Information from Supersets}
\label{sec:symmetry}
When considering the invariant representation of $S$ alone, we can directly utilize DeepSets with a max-pooling operation. However, incorporating background information from $V$ into the representations poses the challenge of determining the appropriate inductive bias for the modeling process. One straightforward approach is to assume the existence of two permutation groups that act independently on $V$ and $S$. However, this assumption is impractical since $S$ is a part of $V$. If we transform $S$, the corresponding adjustments should also be made to $V$. From the perspective of the interaction between subsets and supersets, a natural consideration is to view the supersets as a nested set of sets, i.e., $V = S_1 \cup S_2 \cup \dots \cup S_m,$ where $S_i \cap S_j = \emptyset$ if $i \neq j$. In this perspective, the symmetric properties will become more evident.

We assume the presence of an outer permutation $\pi_m \in \mathbb{S}_m$ that maps indices of the subsets to new indices, resulting in a reordering of the subsets within $V$. Furthermore, within each subset $S_i$, there exists a permutation group denoted by $h_i \in \gH_i$, which captures the possible rearrangements of elements within that specific subset. Each element of $h_i$ represents a distinct permutation on the elements of $S_i$. The symmetry of nested sets of sets, referred to as $\gR$, can be defined as the wreath product of the symmetric group $\mathbb{S}_m$ (representing outer permutations on the $m$ subsets) and the direct product of the permutation groups associated with each subset ($\gG_1 \times \gG_2 \times \dots \times \gG_m$). Formally, $\gR = S_m \wr (\gG_1 \times \gG_2 \times \dots \times \gG_m)$. Therefore, for any transformation $r \in \gR$ acting on $V$, there must exist a corresponding $h \in \gH$ acting on $S$. Keeping this in mind, we define the conditional distribution $P(Y|S,V)$ to adhere to the following property:
\begin{property}
\label{property_inv}
    Let $S \in \gS, V \in \gV$ and $Y \in \gY,$ where $\gH$ and $\gR$ act on $\gS$ and $\gV,$ respectively. Then, the conditional distribution $P(Y|S,V)$ of $Y$ given $(S,V)$ is said to be \textit{invariant} under a group $\gH$ and $\gR$ if and only if:
\begin{align}\nonumber
P(Y\mid S,V) = P(Y\mid g \cdot (S,V)) = P(Y\mid h\cdot S, r \cdot V)   \quad \text{for any } h\in\gH \quad \text{and} \quad r\in\gR  \;.
\end{align}
\end{property}
In this context, we denote the composite group $\grp = \gH \times \gR$, which acts on the product space $S \times V$. We have now clarified the specific inductive bias that should be considered when characterizing neural networks. In the subsequent subsection, we will delve into the exploration of constructing neural networks that fulfill this property.

\subsection{Invariant Sufficient Representation}
\label{sec:representation}
Functional and probabilistic notions of symmetries represent two different approaches to achieving the same goal: a principled framework for constructing models from symmetry considerations. To characterize the precise structure of the neural network satisfying Property~\ref{property_inv}, we need to use a technical tool, that transfers a conditional probability distribution $P(Y|S,V)$ into a representation of $Y$ as a function of statics of $(S,V)$ and random noise, i.e., $f(\noise,M(S,V))$. Here, $M$ are maps, which are based on the idea that a statistic may contain all the information that is needed for an inferential procedure. There are mainly two terms as \textit{Sufficiency} and \textit{Adequacy}. The ideas go hand-in-hand with notions of symmetry: while invariance describes information that is irrelevant, sufficiency and adequacy describe the information that is relevant.

There are various methods to describe sufficiency and adequacy, which are equivalent under some constraints. For convenience and completeness, we follow the concept from \cite{halmos1949application,DBLP:journals/jmlr/Bloem-ReddyT20}. We begin by defining the sufficient statistic as follows, where $\borel_{\gX}$ represents the Borel $\sigma$-algebra of $\gX$:
\vskip 0.15in
\begin{definition} \label{def:pred:suff}
  Assume $M : \gS \times \gV \to \gM$ a measurable map and there is a Markov kernel $k : \borel_{\gX} \times \gS \times \gV \to \bbR_+$ such that for all $X \in \gX$ and $m\in\gM$, $P(\argdot \mid M(S, V)=m) =  k(\argdot, m)$. Then $M$ is a \textbf{sufficient statistic} for $P(S,V).$
\end{definition}
This definition characterizes the information pertaining to the distribution of $(S,V)$. More specifically, it signifies that there exists a single Markov kernel that yields the same conditional distribution of $(S,V)$ conditioned on $M(S,V)=m$, regardless of the distribution $P(S,V)$. It is important to note that if $S \nsubseteq V$, the corresponding value of $M(S,V)$ would be zero, which is an invalid case. When examining the distribution of $Y$ conditioned on $S$ and $V$, an additional definition is required:
% \vskip 0.15in
\begin{definition} \label{def:adequate:stat}
 Let $M : \gS \times \gV \to \gM$ be a measurable map and assume $M$ is sufficient for $P(S,V)$. If for all $s\in\gS,$ $v\in\gV$ and $y\in\gY,$ 
    \begin{align}
    P(Y \in \argdot \mid S=s, V=v) = P(Y \in \argdot \mid M(S,V)=m) \,.
    \label{eq:dseparate}
    \end{align}
Then, $M$ serves as an \textbf{adequate statistic} of $(S,V)$ for $Y$, and also acts as the sufficient statistic.
\end{definition}
Actually, Equation (\ref{eq:dseparate}) is equivalent to conditional independence of $Y$ and $(S,V)$, given $M(S,V),$ i.e., $Y \condind_{M(X)} X,$ This is also called $M$ d-separates $(S,V)$ and $Y.$ In other words, if our goal is to approximate the invariant conditional distribution $P(Y|S,V)$, we can first seek an invariant representation of $(S,V)$ under $\grp$, which also acts as an adequate statistic for $(S,V)$ with respect to $Y$. Consequently, modeling the relationship between $(S,V)$ and $Y$ directly is equivalent to learning the relationship between $M(S,V)$ and $Y$, which naturally satisfies Property~\ref{property_inv}.

With the given definitions, it becomes evident that we can discover an invariant representation of $(S,V)$ with respect to the symmetric groups $\grp$. This representation is referred to as the \textbf{Invariant Sufficient Representation}, signifying that an invariant effective representation should eliminate the information influenced by the actions of $\grp$, while preserving the remaining information regarding its distribution. This concept is also referred to as \textit{Maximal Invariant} in some previous literature, such as \cite{kallenberg2017random,DBLP:journals/jmlr/Bloem-ReddyT20}.

\begin{restatable}{definition}{maxinvt}
\label{def:max_invt}
(\textbf{Invariant Sufficient Representation})
For a group $\grp$ of actions on any $(s,v) \in \gS \times \gV$ , we say $M : \gS \times \gV \to \gM$ is an invariant sufficient  representation for space $\gS \times \gV$, if it satisfies: If $M(s_1,v_1) = M(s_2,v_2)$, then $(s_2,v_2) = g\cdot (s_1,v_1)$ for some $g\in\grp$; otherwise, there is no such a $g$ that satisfies $(s_2,v_2) = g\cdot (s_1,v_1)$. 
\end{restatable}
Clearly, the invariant sufficient representation $M$ serves as the sufficient statistic for $(S,V)$. Furthermore, if the conditional distribution $P(Y|S,V)$ is invariant to transformations induced by the group $\grp$, we can establish that $M(S,V)$ is an adequate statistic for $(S,V)$, as stated in Corollary~\ref{corollary}. In other words, $M(S,V)$ can be considered to encompass all the relevant information for predicting the label given $(S,V)$ while eliminating the redundant information about $\grp$. Hence, we can construct models that learn the relationship between $M(S,V)$ and $Y$, ultimately resulting in an invariant function $Y=f(S,V)$ under the group $\grp$. From a probabilistic standpoint, this implies that $P(Y|S,V)=P(Y|M(S,V))$.

\subsection{Characterizing the Model Structure}
\label{sec:model}
Hence, by computing the invariant sufficient representations of $(S,V)$, we can construct a $\grp$-invariant layer. This idea can give rise to the following theorem:
% \vspace{-0.1in}
\begin{restatable}{theorem}{maintheorem}
% \begin{theorem}
\label{thm:inv}
  Consider a measurable group $\grp$ acting on $\gS \times \gV$. Suppose we select an invariant sufficient representation denoted as $M : \gS \times \gV \to \gM$. In this case, $P(Y|S,V)$ satisfies Property~\ref{property_inv} if and only if there exists a measurable function denoted as $f : [0,1] \times \gS \times \gV \to \gY$ such that the following equation holds:
\begin{align}\label{eq:theory}
(S, V, Y) \equas \big(S, V, f(\noise, M(S,V)) \big) \quad \text{where } \noise\sim \Unif[0,1] \text{ and } \noise\condind (S,V) ;.
\end{align}
% \end{theorem}
\end{restatable}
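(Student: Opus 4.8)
The plan is to recognize Theorem~\ref{thm:inv} as an instance of the \emph{noise-outsourcing} (transfer) characterization of invariant conditional distributions, in the spirit of \cite{DBLP:journals/jmlr/Bloem-ReddyT20}. The two implications are asymmetric in difficulty, and both hinge on a single structural fact implicit in Definition~\ref{def:max_invt}: since $M(s_1,v_1)=M(s_2,v_2)$ holds \emph{exactly} when $(s_1,v_1)$ and $(s_2,v_2)$ lie in the same $\grp$-orbit, the map $M$ is $\grp$-invariant, i.e.\ $M(g\cdot(s,v))=M(s,v)$ for every $g\in\grp$, and it separates distinct orbits. This is the only property of the representation I will use.

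For the backward direction ($\Leftarrow$), assume $(S,V,Y)\equas(S,V,f(\noise,M(S,V)))$ with $\noise\condind(S,V)$. Because the noise is independent of $(S,V)$, the regular conditional law of $Y$ given $\{S=s,V=v\}$ equals the law of $f(\noise,M(s,v))$. Evaluating this at the transformed point $g\cdot(s,v)=(h\cdot s,r\cdot v)$ and using $M(g\cdot(s,v))=M(s,v)$ yields the same law, which is precisely the invariance asserted in Property~\ref{property_inv}. This step is a direct computation once the invariance of $M$ is in place.

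For the forward direction ($\Rightarrow$), the main work is to convert Property~\ref{property_inv} into the adequacy of $M$, namely $Y\condind_{M(S,V)}(S,V)$, which is the content of Corollary~\ref{corollary}. The argument is that invariance of $P(Y\mid S,V)$ under $\grp$ forces the conditional law to be constant on each $\grp$-orbit; since $M$ is a maximal invariant, the orbit of $(s,v)$ is recoverable from $M(s,v)$, so there is a Markov kernel $\kappa$ with $P(Y\in\argdot\mid S=s,V=v)=\kappa(\argdot,M(s,v))$, which is exactly the d-separation condition of Definition~\ref{def:adequate:stat}. With adequacy established, I would invoke the noise-outsourcing lemma in its statistic-conditioned form: on standard Borel spaces, $Y\condind_{M(S,V)}(S,V)$ guarantees a measurable $f:[0,1]\times\gS\times\gV\to\gY$ and a uniform $\noise\condind(S,V)$ such that $(S,V,Y)\equas(S,V,f(\noise,M(S,V)))$, giving the asserted existence of $f$.

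The main obstacle is the measurability underlying the conversion step in the forward direction: passing from ``the conditional law is orbit-invariant'' to ``the conditional law factors measurably through $M$'' requires that the assignment of a regular conditional distribution to each orbit be a measurable function of $M(S,V)$. This is where the standing assumption that all spaces are standard Borel and all maps are measurable is essential, since it licenses both the existence of regular conditional distributions and the measurable-selection argument needed to define $\kappa$, as well as the applicability of the noise-outsourcing lemma itself. Rather than constructing $\kappa$ by hand, I would appeal to the disintegration theorem on Borel spaces together with the measurability of $M$ to obtain the factorization cleanly.
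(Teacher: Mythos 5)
Your proposal takes essentially the same route as the paper: the forward direction converts Property~\ref{property_inv} into the conditional independence $Y\condind_{M(S,V)}(S,V)$ (the paper's Lemma~\ref{lem:maximal:invariant}, i.e.\ the maximal-invariant characterization it cites from \citep{DBLP:journals/jmlr/Bloem-ReddyT20}) and then applies noise outsourcing conditioned on a statistic (the paper's Lemma~\ref{lem:noise:out:suff}), while the backward direction follows from the $\grp$-invariance of $M$. The only difference is one of exposition: you sketch the orbit-separation argument behind the ``invariance $\Leftrightarrow$ conditional independence'' step, which the paper delegates to Lemma 20 of \citep{DBLP:journals/jmlr/Bloem-ReddyT20} without proof.
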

In this context, the variable $\noise$ represents generic noise, which can be disregarded when focusing solely on the model structure rather than the complete training framework \citep{DBLP:journals/jmlr/Bloem-ReddyT20,ou2022learning}. Consequently, the theorem highlights the necessity of characterizing the neural networks in the form of $f(M(S,V))$. Moreover, Theorem~\ref{thm:inv} implies that the invariant sufficient representation $M(S,V)$ also serves as an adequate statistic. This can be illustrated as follows:
\begin{align}\nonumber
    P(Y \in \argdot | S=s, V=v) = P(Y \in \argdot | S=s, V=v, M(S,V)=m) = P(Y \in \argdot | M(S,V)=m).
\end{align}
To provide additional precision and clarity, we present the following corollary, which demonstrates that $M(S,V)$ is an adequate statistic of $(S,V)$ for $Y.$
\begin{restatable}{corollary}{ourcorollary}
% \begin{corollary}
\label{corollary}
    Let $\grp$ be a compact group acting measurably on standard Borel spaces $S \times V$, and let $\gM$ be another Borel space. Then any invariant sufficient representation $M : \gS \times \gV \to \gM$ under $\grp$ is an adequate statistic of $(S,V)$ for $Y.$
% \end{corollary}
\end{restatable}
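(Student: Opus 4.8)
The goal is to verify the two requirements of Definition~\ref{def:adequate:stat}: that $M$ is sufficient for $P(S,V)$, and that $Y \condind_{M(S,V)} (S,V)$. Sufficiency is already granted, since an invariant sufficient representation is by construction a sufficient statistic for $(S,V)$; so the entire content is the conditional independence, equivalently the factorization $P(Y\in\argdot\mid S=s,V=v)=P(Y\in\argdot\mid M(S,V)=m)$. Throughout I take Property~\ref{property_inv} to be in force, i.e. $P(Y\mid S,V)$ is $\grp$-invariant --- this is necessary, for otherwise $(S,V)$ would retain $\grp$-sensitive information about $Y$ that the invariant $M$ discards. The plan is to identify the $\sigma$-algebra generated by $M$ with the invariant $\sigma$-algebra of the action, and then observe that an invariant conditional law must factor through it.

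Let $\gI_{\grp}$ denote the sub-$\sigma$-algebra of $\borel_{\gS\times\gV}$ consisting of $\grp$-invariant Borel sets. The key lemma is that $\sigma(M)=\gI_{\grp}$ up to $P(S,V)$-null sets. The inclusion $\sigma(M)\subseteq\gI_{\grp}$ is immediate: Definition~\ref{def:max_invt} forces $M(g\cdot(s,v))=M(s,v)$ for every $g\in\grp$, so $M$ is $\grp$-invariant and each $M^{-1}(B)$ is an invariant set. For the reverse inclusion I would use maximality: by Definition~\ref{def:max_invt} the level sets of $M$ are exactly the $\grp$-orbits, so every invariant set is a union of level sets of $M$, and pushing such a set forward through $M$ and pulling back should recover it as $M^{-1}(B)$ for a Borel $B\subseteq\gM$. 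With $\sigma(M)=\gI_{\grp}$ in hand, invariance of $(s,v)\mapsto P(Y\in A\mid S=s,V=v)$ makes this map $\gI_{\grp}$-measurable, hence $\sigma(M)$-measurable, for every $A$ in a countable generating family of $\borel_{\gY}$; the Doob--Dynkin lemma (valid because $\gM$ is standard Borel) then yields a Markov kernel $\kappa$ on $\gM$ with $P(Y\in\argdot\mid S,V)=\kappa(\argdot,M(S,V))$. A single application of the tower property, conditioning further on $M(S,V)$, gives $P(Y\in\argdot\mid M(S,V))=\kappa(\argdot,M(S,V))=P(Y\in\argdot\mid S,V)$, which is exactly \Eqref{eq:dseparate}.

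The main obstacle is the reverse inclusion $\gI_{\grp}\subseteq\sigma(M)$: turning the set-theoretic fact ``invariant set $=$ union of orbits $=$ union of $M$-level sets'' into the measurable statement ``invariant set $=M^{-1}(\text{Borel})$'' is exactly where compactness of $\grp$ and the standard Borel hypothesis are needed. Compactness makes every orbit compact, hence closed, and makes the orbit equivalence relation closed, so that the orbit space is again standard Borel and admits a measurable cross-section; this is what guarantees that the quotient map --- which $M$ realizes --- generates precisely the invariant $\sigma$-algebra and that images of invariant Borel sets are measurable. I would isolate this as the technical core and cite the measurable-selection / Effros-type machinery for compact group actions on standard Borel spaces.

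A shorter route, if one is willing to invoke Theorem~\ref{thm:inv} directly, bypasses the $\sigma$-algebra identification altogether. Under Property~\ref{property_inv} the theorem supplies the representation $(S,V,Y)\equas\big(S,V,f(\noise,M(S,V))\big)$ with $\noise\condind(S,V)$ and $\noise\sim\Unif[0,1]$. For Borel $A$ one then gets $P(Y\in A\mid S,V)\equas\int_0^1\1[f(u,M(S,V))\in A]\,du$, a function of $(S,V)$ only through $M(S,V)$; the tower property again upgrades this to $P(Y\in\argdot\mid S,V)=P(Y\in\argdot\mid M(S,V))$, establishing adequacy. I would present the $\sigma$-algebra argument as the conceptual proof and note this as the quick corollary of the theorem.
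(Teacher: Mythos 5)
Your proposal is correct, and your ``shorter route'' is essentially the paper's own proof: the paper obtains the conditional independence $Y \condind_{M(S,V)} (S,V)$ from Lemma~\ref{lem:maximal:invariant} (a restatement of Lemma~20 of Bloem-Reddy and Teh, cited without proof) and then remarks that this, together with sufficiency, is precisely Definition~\ref{def:adequate:stat}; extracting the same independence from the representation $(S,V,Y) \equas (S,V,f(\noise,M(S,V)))$ of Theorem~\ref{thm:inv} via the tower property, as you do, is the identical step in functional clothing, and is in fact how the paper motivates the corollary in its main text. Where you genuinely differ is your primary argument: identifying $\sigma(M)$ with the invariant $\sigma$-algebra $\gI_{\grp}$ up to null sets and then factoring the pointwise-invariant conditional law through $\sigma(M)$ by Doob--Dynkin. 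The paper never performs this identification---it is exactly the content it outsources to the cited lemma---so your route buys a self-contained proof, at the price of the technical core you honestly flag: the inclusion $\gI_{\grp}\subseteq\sigma(M)$, which on standard Borel spaces follows from Blackwell--Mackey (a Borel set saturated under the level sets of a Borel map into a standard Borel space lies in the $\sigma$-algebra that map generates), with compactness ensuring the orbits and quotient are well behaved. Two hypotheses deserve sharper treatment than either you or the paper give them: you rightly note that Property~\ref{property_inv} must be assumed even though the corollary's statement omits it, but your claim that sufficiency is ``granted by construction'' repeats the paper's own gloss rather than repairing it---Definition~\ref{def:pred:suff} demands a single Markov kernel serving every distribution in the relevant family, which holds only when $P(S,V)$ is itself $\grp$-invariant (the kernel being normalized Haar measure smeared over each orbit; this is the first half of Lemma~\ref{lem:maximal:invariant} and is where compactness enters). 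Your proof should therefore assume marginal invariance of $P(S,V)$ alongside Property~\ref{property_inv}.
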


\subsection{Implementation}
In theory, invariant sufficient representations can be computed by selecting a representative element for each orbit under the group $\grp$. However, this approach is impractical due to the high dimensions of the input space and the potentially enormous number of orbits. Instead, in practice, a neural network can be employed to \textit{approximate} this process and generate the desired representations \citep{zaheer2017deep,DBLP:journals/jmlr/Bloem-ReddyT20}, particularly in tasks involving sets or set-like structures.

However, the approach to approximating such a representation for $(S,V)$ under $\grp$ remains unclear. To simplify the problem, we can divide the task of finding the invariant sufficient representation of $S$ and $V$ under $H$ and $R$, respectively, as defined in Section~\ref{sec:symmetry}. This concept is guaranteed by the following proposition:
\begin{restatable}{proposition}{divideprop}
% \begin{proposition}
\label{prop:divide}
     Assuming that $M_s: \mathcal{S} \to \mathcal{S}_1$ and $M_v: \mathcal{V} \to \mathcal{V}_1$ serve as invariant sufficient  representations for $S$ and $V$ with respect to $H$ and $R$, respectively, then there exist maps $f: \mathcal{S}_1 \times \mathcal{V}_1 \to \mathcal{M}$ that establish the invariant sufficient representation of $M$.
% \end{proposition}
\end{restatable}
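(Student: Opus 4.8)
The plan is to exhibit an explicit construction and then verify it against Definition~\ref{def:max_invt}. Because the composite group $\grp = \gH \times \gR$ acts coordinatewise on the product space $\gS \times \gV$ — that is, $g\cdot(s,v) = (h\cdot s, r\cdot v)$ for $g=(h,r)$, as set up in Section~\ref{sec:symmetry} — the natural candidate is to take $\gM = \mathcal{S}_1 \times \mathcal{V}_1$, let $f$ be the identity map, and set $M(s,v) = (M_s(s), M_v(v))$. First I would argue that this coordinatewise pairing is the correct object, and then I would check the two directions of the maximal-invariant condition.

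The heart of the argument is that orbits of a product action factor as products of orbits. Explicitly, $(s_2,v_2)$ lies in the $\grp$-orbit of $(s_1,v_1)$ precisely when there is $(h,r)\in\grp$ with $s_2 = h\cdot s_1$ and $v_2 = r\cdot v_1$, which holds exactly when $s_2$ shares an $\gH$-orbit with $s_1$ and simultaneously $v_2$ shares an $\gR$-orbit with $v_1$. I would then invoke the hypotheses factorwise via Definition~\ref{def:max_invt}: $M_s(s_1)=M_s(s_2)$ iff $s_1,s_2$ share an $\gH$-orbit, and $M_v(v_1)=M_v(v_2)$ iff $v_1,v_2$ share an $\gR$-orbit. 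Combining the two equivalences coordinatewise gives
\begin{align}\nonumber
M(s_1,v_1) = M(s_2,v_2) \iff (s_2,v_2) = g\cdot(s_1,v_1) \text{ for some } g\in\grp,
\end{align}
which is exactly both implications demanded by Definition~\ref{def:max_invt}. Measurability of $M$ follows since it is the coordinatewise pairing of the measurable maps $M_s, M_v$ into a product Borel space and $f=\mathrm{id}$ is measurable; that $M$ is then a sufficient statistic is inherited automatically from maximality, as noted in the discussion following Definition~\ref{def:max_invt}.

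The hard part will be justifying the product-orbit factorization rather than the surrounding bookkeeping, and two points need care. First, the factorization is valid only because $\grp$ is a genuine direct product acting coordinatewise: if only a coupled subgroup of $\gH\times\gR$ acted, the orbit of $(s_1,v_1)$ could be strictly smaller than the product of the component orbits, and $M$ would fail to be maximal (it would identify points lying in distinct orbits). I would therefore lean explicitly on the definition $\grp=\gH\times\gR$ from Section~\ref{sec:symmetry}. Second, admissible pairs are constrained by $s\subseteq v$, since $M(s,v)$ is set to zero when $s\not\subseteq v$; I would restrict attention to the valid domain and note that the coordinatewise action preserves this constraint — permuting the blocks of $v$ together with the elements of $s$ keeps $s$ a subset of $v$ — so orbits remain inside the valid set and neither direction of the equivalence above is disturbed.
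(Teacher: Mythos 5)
Your proposal is correct and takes essentially the same route as the paper's own proof: the paper likewise forms $M(s,v)$ from the pair $(M_s(s), M_v(v))$ via an injective map $f$, reduces equality of the combined representation to componentwise equality, and then invokes the factorwise maximal-invariance hypotheses together with the coordinatewise (direct-product) action, exactly as you do. Your packaging with $f=\mathrm{id}$ and the explicit product-orbit factorization is just a cleaner rendering of the same argument, which the paper instead spells out through a three-case analysis.
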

Proposition~\ref{prop:divide} specifically states that we can construct the invariant sufficient representations for $S$ and $V$ individually, as they are comparatively easier to construct compared to $M(S,V)$. In the work of \cite{DBLP:journals/jmlr/Bloem-ReddyT20}, it is demonstrated that for $S$ under $H$, the empirical measure $M_s(S)=\sum_{s_i \in S} \delta(s_i)$ can be chosen as a suitable invariant sufficient representation. Here, $\delta(s_i)$ represents an atom of unit mass located at $s_i$, such as one-hot embeddings. Additionally, leveraging the proposition established by \citet{zaheer2017deep}, we can employ $\rho \sum_{s \in S} \phi(s)$ to approximate the empirical measure. This approximation offers a practical and effective approach to constructing the invariant sufficient representation.
\begin{proposition} 
If $f$ is a valid permutation invariant function on $S$, it can be approximated arbitrarily close in the form of $f(S) = \rho \left( \sum_{s \in S} \phi(s) \right)$, for suitable transformations $\phi$ and $\rho$.
\end{proposition}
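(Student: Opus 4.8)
The plan is to reduce the claim to the now-classical Deep Sets argument of \citet{zaheer2017deep}: construct a single sum-pooling feature map $\phi$ whose aggregate $E(S) := \sum_{s \in S} \phi(s)$ is injective on sets, and then define $\rho$ as the composition of $f$ with a continuous left inverse of $E$. Throughout I assume, as is standard and as is implicit in the phrase ``approximated arbitrarily close,'' that the sets $S$ have cardinality bounded by some fixed $N$ and that the element domain $\gX$ is compact (e.g.\ a bounded subset of $\R^d$); continuity of $f$ is also assumed, consistent with the measurability conventions adopted earlier.

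First I would fix the maximal set size $N$ and construct an explicit injective pooling map. For scalar elements ($d=1$) the natural choice is the power-sum embedding $\phi(s) = (s, s^2, \dots, s^N)^\top$, so that $E(S) = \sum_{s \in S} \phi(s)$ collects the first $N$ power sums of the elements of $S$. By the Newton--Girard identities, the power sums determine the elementary symmetric polynomials, which are exactly the coefficients of the monic polynomial whose roots are the elements of $S$; hence $E(S)$ determines $S$ uniquely up to the ordering of its elements. For the vector-valued case $d>1$ one replaces the power sums by a finite generating family of multisymmetric polynomials, which plays the same role. (For a countable domain one may instead take $\phi(s) = 4^{-s}$, whose aggregate is a unique positional code for $S$.) This is the step I expect to be the crux: establishing that $E$ is \emph{injective on the quotient by permutations} --- that the sum-pooled representation discards the ordering but nothing else, including the subtlety of sets of differing size --- and that its image $E(\gX^{\le N})$ is compact, so that $E$ is a homeomorphism onto its image.

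Next, using injectivity, I would define $\rho$ on the image of $E$ by $\rho(E(S)) := f(S)$. This is well defined precisely because $f$ is permutation invariant: any two orderings of the same set map to the same code under $E$ and to the same value under $f$, so there is no ambiguity, and invariance is exactly what makes $\rho$ a genuine function of the code. Continuity of $\rho$ on the compact image $E(\gX^{\le N})$ follows from continuity of $f$ together with continuity of the inverse $E^{-1}$ (a continuous bijection from a compact space is a homeomorphism), and the Tietze extension theorem then lets me extend $\rho$ to a continuous function on the whole ambient Euclidean space.

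Finally I would invoke universal approximation. Since $\phi$ can be taken to be the exact polynomial map above, $E$ is an exact fixed-width sum-pooling layer, so only $\rho$ requires approximation: by the standard feed-forward universal approximation theorem, $\rho$ can be approximated uniformly to arbitrary accuracy on the compact set $E(\gX^{\le N})$ by a network, yielding $f(S) \approx \rho(\sum_{s\in S}\phi(s))$ uniformly in $S$. If one insists that $\phi$ also be a network, one approximates $\phi$ on $\gX$ as well and propagates the error through the (Lipschitz on the compact image) map $\rho$, which is routine. Composing the two approximators gives a model of the claimed form $\rho(\sum_{s\in S}\phi(s))$ approximating any valid permutation-invariant $f$, completing the argument.
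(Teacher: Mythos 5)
Your plan is, in substance, the right one: the paper does not prove this proposition at all but imports it from \citet{zaheer2017deep}, and your construction --- an injective sum-pooling embedding $E(S)=\sum_{s\in S}\phi(s)$, the definition $\rho := f \circ E^{-1}$ on the image (well defined exactly by permutation invariance of $f$), a compactness/homeomorphism argument, Tietze extension, and universal approximation of $\rho$ --- is precisely the structure of the proof in that cited source. So there is no alternative in-paper route to compare against; the only question is whether your sketch closes its own steps.

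One step, as written, would fail, and it is the one you yourself flagged as the crux. The pure power-sum embedding $\phi(s) = (s, s^2, \dots, s^N)^\top$ is injective (via Newton--Girard) only among multisets of a \emph{fixed, known} cardinality. Across different cardinalities it is not injective whenever $0 \in \gX$: the sets $\{a\}$ and $\{0, a\}$ have identical power sums of every order, so $E(\{a\}) = E(\{0,a\})$ although no permutation relates them, and one cannot assume the compact element domain avoids the origin. Since the proposition is used in this paper for subsets $S \subseteq V$ of varying size, this case cannot be excluded. The repair is standard and one line: augment the feature map with a constant coordinate, $\phi(s) = (1, s, s^2, \dots, s^N)^\top$, so that the first coordinate of $E(S)$ recovers $|S|$; conditioned on that value, your fixed-cardinality Newton--Girard argument applies verbatim (and the analogous cardinality coordinate fixes the multisymmetric-polynomial embedding for $d>1$). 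With that augmentation, the remainder of your argument --- compactness of the image, continuous inverse on a compact set, Tietze extension, uniform approximation of $\rho$, and propagation of the $\phi$-approximation error through a uniformly continuous $\rho$ --- goes through as you describe.
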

During the implementation, an encoder $\phi$ is utilized to generate embeddings for each element. For example, when dealing with sets of images, ResNet can be employed as the encoder. On the other hand, $\rho$ can represent various feedforward networks, such as fully connected layers combined with nonlinear activation functions. Similarly, for the symmetric group $\gR$ acting on $V$, \citet{DBLP:conf/icml/MaronLCF20} havedemonstrated that the \textit{universal approximators} of the invariant sufficient representations are $\sum_{S \in V}\sum_{s \in S} \phi(s)$, which is equivalent to $\sum_{x_j \in V} \phi(x_j)$. Hence, for neural subset selection tasks, when considering a specific subset $S \in V$, The neural network construction is outlined as follows:
\begin{align}
    \theta(S,V) = \sigma\bigg(\theta_1(\sum_{i=1}^{n_i} \phi(x_i)) + \theta_2\bigg(\sum_{i=1}^{n} \phi(x_j)\bigg)\bigg),
\end{align}
Here, the feed-forward modules $\theta_1$ and $\theta_2$ are accompanied by a non-linear activation layer denoted by $\sigma$. Intuitively, the inherent simplicity of the structure enables us to utilize the DeepSet module to process all elements in $V$ and integrate them with the invariant sufficient representations of $S.$
% In Appendix~\ref{sec:app:cost}, we provided an illustration of such a structure in Figure~\ref{fig:structure} and a description of DeepSet.
In practice, there are different ways to integrate the representation of $V$ into the representation of $S$, such as concatenation \citep{DBLP:conf/cvpr/QiSMG17} or addition \citep{DBLP:conf/icml/MaronLCF20}. Although this idea is straightforward, in the following section, we will demonstrate how this modification significantly enhances the performance of baseline methods. Notably, this idea has been empirically utilized in previous works, such as \cite{DBLP:conf/cvpr/QiSMG17,DBLP:conf/nips/QiYSG17}. However, we derive it from a probabilistic invariant perspective. A corresponding equivariant framework was also introduced in \citet{NEURIPS2020_Wang}, which complements our results in the development of deep equivariant neural networks.

\section{Experiments}
The proposed methods are assessed across multiple tasks, including product recommendation, set anomaly detection, and compound selection. To ensure robustness, all experiments are repeated five times using different random seeds, and the means and standard deviations of the results are reported. For additional experimental details and settings, we provide comprehensive information in Appendix~\ref{sec:app:exp}.

\textbf{Evaluations.} The main goal of the following tasks is to predict the corresponding $S^{\star}$ given $V.$ Therefore, we evaluate the methods using the mean Jaccard coefficient (MJC) metric. Specifically, for each data sample $(S^{\star},V)$ if the model's prediction is $S^{\prime},$ then the Jaccard coefficient is given as:
$
JC(S^{\star},S^{\prime}) = \frac{|S^{\star}\cap S^{\prime}|}{|S^{\star}\cup S^{\prime}|}.
$
Therefore, the MJC is computed by averaging JC metric over all samples in the test set.

\textbf{Baselines.} To show our method can achieve better
performance on real applications, we compare it with the following methods:
\begin{itemize}[leftmargin=*]
    \item \textbf{Random.} The results are calculated based on random estimates, which provide a measure of how challenging the tasks are.
    \item \textbf{PGM \citep{tschiatschek2018differentiable}.} PGM is a probabilistic greedy model (PGM) solves optimization Problem~\ref{prob_set_func_learning} with a differentiable extension of greedy maximization algorithm. In our paper, we leverage the results of PGD conducted on various datasets as reported in the study of \cite{ou2022learning}.
    \item \textbf{DeepSet \citep{zaheer2017deep}.} Here, we use DeepSet as a baseline by predicting the probability of which instance should be in $S^{\star},$ i.e., learn an invariant permutation mapping $2^V \mapsto [0, 1]^{|V|}$.
    It serves as the backbone in EquiVSet to learn set functions, and can also be employed as a baseline.
    \item \textbf{Set Transformer \citep{lee2019set}.} Set Transformer, compared with DeepSet, goes beyond by incorporating the self-attention mechanism to account for pairwise interactions among elements. This will make models to capture dependencies and relationships between different elements.
    \item \textbf{EquiVSet \citep{ou2022learning}.} EquiVSet uses 
    an energy-based model (EBM) to construct the set mass function $P(S|V)$ from a probabilistic perspective, i.e, they mainly focus on learning a distribution $P(S|V)$ monotonically growing with the utility function $F(S,V).$ This requires to learn a conditional distribution $P(Y|S,V)$ as approximation distribution. Actually, their framework is to approximate symmetric $F(S)$ instead of symmetric $F(S,V).$
\end{itemize}

\begin{table}[t]
\vspace{-0.4in}
\centering
\caption{Product recommendation results on all categories.}
\label{tab:product_recom_full}
\vspace{2mm}
\setlength{\tabcolsep}{1.mm}{
\begin{tabular}{@{}ccccccc@{}}\toprule
Categories & Random & PGM & DeepSet & Set Transformer & EquiVSet & \ours \\
\midrule
Toys      & 0.083 & 0.441 $\pm$ 0.004 & 0.421 $\pm$ 0.005 & 0.625 $\pm$ 0.020 & 0.684 $\pm$ 0.004 & \textbf{0.769} $\pm$ \textbf{0.005}\\
Furniture & 0.065 & 0.175 $\pm$ 0.007 & 0.168 $\pm$ 0.002 & \textbf{0.176} $\pm$ \textbf{0.008} & 0.162 $\pm$ 0.020 & 0.169 $\pm$ 0.050\\
Gear      & 0.077 & 0.471 $\pm$ 0.004 & 0.379 $\pm$ 0.005 & 0.647 $\pm$ 0.006 & 0.725 $\pm$ 0.011 & \textbf{0.808} $\pm$ \textbf{0.012}\\
Carseats  & 0.066 & 0.230 $\pm$ 0.010 & 0.212 $\pm$ 0.008 & 0.220 $\pm$ 0.010 & 0.223 $\pm$ 0.019 & \textbf{0.231} $\pm$ \textbf{0.034}\\
Bath      & 0.076 & 0.564 $\pm$ 0.008 & 0.418 $\pm$ 0.007 & 0.716 $\pm$ 0.005 & 0.764 $\pm$ 0.020 & \textbf{0.862} $\pm$ \textbf{0.005}\\
Health    & 0.076 & 0.449 $\pm$ 0.002 & 0.452 $\pm$ 0.001 & 0.690 $\pm$ 0.010 & 0.705 $\pm$ 0.009 & \textbf{0.812} $\pm$ \textbf{0.005}\\
Diaper    & 0.084 & 0.580 $\pm$ 0.009 & 0.451 $\pm$ 0.003 & 0.789 $\pm$ 0.005 & 0.828 $\pm$ 0.007 &\textbf{0.880} $\pm$ \textbf{0.007}\\
Bedding   & 0.079 & 0.480 $\pm$ 0.006 & 0.481 $\pm$ 0.002 & 0.760 $\pm$ 0.020 & 0.762 $\pm$ 0.005 &\textbf{0.857} $\pm$ \textbf{0.010}\\
Safety    & 0.065 & \textbf{0.250} $\pm$ \textbf{0.006} & 0.221 $\pm$ 0.004 & 0.234 $\pm$ 0.009 & 0.230 $\pm$ 0.030 &0.238 $\pm$ 0.015\\
Feeding   & 0.093 & 0.560 $\pm$ 0.008 & 0.428 $\pm$ 0.002 & 0.753 $\pm$ 0.006 & 0.819 $\pm$ 0.009 &\textbf{0.885} $\pm$ \textbf{0.005}\\
Apparel   & 0.090 & 0.533 $\pm$ 0.005 & 0.508 $\pm$ 0.004 & 0.680 $\pm$ 0.020 & 0.764 $\pm$ 0.005 &\textbf{0.837} $\pm$ \textbf{0.003}\\
Media     & 0.094 & 0.441 $\pm$ 0.009 & 0.426 $\pm$ 0.004 & 0.530 $\pm$ 0.020 & 0.554 $\pm$ 0.005 &\textbf{0.620} $\pm$ \textbf{0.023}\\
\bottomrule
\end{tabular}}
\end{table}

\vspace{-0.2in}
\subsection{Product Recommendation}\label{sec:OS_tasks:pr}
The task requires models to recommend the most interested subset for a customer given 30 products in a category. We use the dataset \citep{DBLP:conf/nips/GillenwaterKFT14} from the Amazon baby registry for this experiment, which includes many product subsets chosen by various customers. Amazon classifies each item on a baby registry as being under one of several categories, such as ``Health'' and ``Feeding''. Moreover, each product is encoded into a 768-dimensional vector by the pre-trained BERT model based on its textual description. Table~\ref{tab:product_recom_full} reports the performance of all the models across different categories. Out of the twelve cases evaluated, \ours performs best in ten of them, except for Furniture and Safety tasks. The discrepancy in performance can be attributed to the fact that our method is built upon the EquiVSet framework, with the main modification being the model structure for modeling $F(S, V)$. Consequently, when EquiVSet performs poorly, it also affects the performance of \ours. Nonetheless, it is worth noting that \ours consistently outperforms EquiVSet and achieves significantly better results than other baselines in the majority of cases. The margin of improvement is substantial, demonstrating the effectiveness and superiority of \ours.

\subsection{Set Anomaly Detection} \label{sec:OS_tasks:ad}
We conduct set anomaly detection tasks on three real-world datasets: the double MNIST \citep{mulitdigitmnist}, the CelebA \citep{liu2015deep} and the F-MNIST \citep{xiao2017fashion}. Each dataset is divided into the training, validation, and test sets with sizes of 10,000, 1,000, and 1,000, respectively. For each dataset, we randomly sample $n \in \{2,3,4\}$ images as the OS oracle $S^*.$ The setting is followed by \citep{zaheer2017deep,ou2022learning}. Let's take CelebA as an example. In this case, the objective is to identify anomalous faces within each set solely through visual observation, without any access to attribute values. The CelebA dataset comprises 202,599 face images, each annotated with 40 boolean attributes. When constructing sets, for every ground set $V$, we randomly choose n images from the dataset to form the OS Oracle $S^*$, ensuring that none of the selected images contain any of the two attributes. Additionally, it is ensured that no individual person's face appears in both the training and test sets. Regarding Table~\ref{tab:anomaly_protein}, it is evident that our model demonstrates a substantial performance advantage over all the baselines. Specifically, in the case of Double MNIST, our model shows a remarkable improvement of 23\% compared to EquiVSet, which itself exhibits the best performance among all the baselines considered. This significant margin of improvement highlights the superior capabilities of our model in tackling the given task.

\hspace{-1cm}
\begin{table}[!t]
% \begin{wraptable}{r}{0.5\textwidth}
    \vspace{-0.4in}
    \centering
    \caption{
        Set anomaly detection and compound selection results}
    \label{tab:anomaly_protein}
    \vspace{2mm}
    \resizebox{0.9\textwidth}{!}{%
    \begin{tabular}{l | lll | ll}
        \toprule
        &  
        \multicolumn{3}{c}{\textsc{Anomaly Detection}} &
        \multicolumn{2}{c}{\textsc{Compound selection}} \\
        
        &
        \textbf{Double MNIST} &
         \textbf{CelebA} &
        \textbf{F-MNIST} &

         \textbf{PDBBind} &
         \textbf{BindingDB}
        \\
        \midrule

         \multirow{1}{*}{\bf \textsc{Random}}  & 
         0.0816 &
         0.2187 &
         0.193 &
         0.099 &
         0.009
         \\

                \midrule

         \multirow{1}{*}{\bf \textsc{PGM}}  & 
         0.300 $\pm$ 0.010 &
         0.481 $\pm$ 0.006 &
         0.540 $\pm$ 0.020 &
         0.910 $\pm$ 0.010 &
         0.690 $\pm$ 0.020
         \\

         \midrule

         \multirow{1}{*}{\bf \textsc{DeepSet}}  & 
         0.111 $\pm$ 0.003 &
         0.440 $\pm$ 0.006 &
          0.490 $\pm$ 0.020 &
         0.901 $\pm$ 0.011 &
         0.710 $\pm$ 0.020
         \\
        
                \midrule

        \multirow{1}{*}{\bf \textsc{Set Transformer}}  & 
         0.512 $\pm$ 0.005 &
         0.527 $\pm$ 0.008 &
         0.581 $\pm$ 0.010 &
         0.919 $\pm$ 0.015 &
         0.715 $\pm$ 0.010
         \\

                \midrule

         \multirow{1}{*}{\bf \textsc{EquiVSet}}  & 
         0.575 $\pm$ 0.018 &
         0.549 $\pm$ 0.005 &
         0.645 $\pm$ 0.010 &
         0.924 $\pm$ 0.011 &
         0.721 $\pm$ 0.009
         \\

                \midrule

        \multirow{1}{*}{\bf \textsc{INSET}}  & 
         \textbf{0.707} $\pm$ \textbf{0.010}  &
         \textbf{0.580} $\pm$ \textbf{0.012}  &
         \textbf{0.721} $\pm$ \textbf{0.021}  &
         \textbf{0.935} $\pm$ \textbf{0.008}  &
         \textbf{0.734} $\pm$ \textbf{0.010}
         \\

        \bottomrule

    \end{tabular}
    }
    % \vspace{-0.2in}
% \end{wraptable}
\end{table}
% \hspace{-1cm}

\vspace{-0.3in}
\subsection{Compound Selection in AI-aided Drug Discovery} \label{sec:OS_tasks:cs}
The screening of compounds with diverse biological activities and satisfactory ADME (absorption, distribution, metabolism, and excretion) properties is a crucial stage in drug discovery tasks \citep{DBLP:conf/kdd/LiZ0HWXHD021,DBLP:journals/corr/abs-2201-09637,gimeno2019light}. Consequently, virtual screening is often a sequential filtering procedure with numerous necessary filters, such as selecting diverse subsets from the highly active compounds first and then removing compounds that are harmful for ADME. After several filtering stages, we reach the optimal compound subset. However, it is hard for neural networks to learn the full screening process due to a lack of intermediate supervision signals, which can be very expensive or impossible to obtain due to the pharmacy’s protection policy. Therefore, the models are supposed to learn this complicated selection process in an end-to-end manner, i.e., models will predict $S^{*}$ only given the optimal subset supervision signals without knowing the intermediate process. However, this is out of the scope of this paper, since the task is much more complex and requires extra knowledge, and thus we leave it as future work.

\begin{wraptable}{r}{0.5\textwidth}

% \begin{table}[ht]
    \small
        \vspace{-0.3in}
    \centering
    \caption{
        Ablation Studies on CelebA with different parameters }
    \label{tab:abaltion}
    %\resizebox{0.3\linewidth}{!}{%
    \vspace{2mm}
    \begin{tabular}{l | ll}
        \toprule
        &  
        MJC &
       Parameters
        \\
        \midrule       

         \multirow{1}{*}{\bf Random}  
         & 
         0.2187 &
         \quad -
         \\

         \multirow{1}{*}{\bf EquiVSet}  & 
         0.549$\pm$0.005 &
         1782680
         \\

        \multirow{1}{*}{\bf EquiVSet (v1)}  & 
          0.554$\pm$0.007 &
         2045080
         \\

         \multirow{1}{*}{\bf EquiVSet (v2)}  & 
          0.560$\pm$0.005 &
          \textbf{3421592}
         \\
    \midrule
         \multirow{1}{*}{\bf INSET} &  
         \textbf{0.580}$\pm$\textbf{0.012} & 
         2162181
        \\
          
        \bottomrule

    \end{tabular}
    %}
% \end{table}
\end{wraptable}

To simulate the process, we only apply one filter: high bioactivity to acquire the optimal subset of compound selection following~\citep{ou2022learning}. We conduct experiments using the following datasets: PDBBind \citep{liu2015pdb} and BindingDB \citep{liu2007bindingdb}. %To simulate the full selection process,
Table \ref{tab:anomaly_protein} shows that our method performs better than the baselines and significantly outperform the random guess, especially on the BindingDB dataset. Different from the previous tasks, the performance of these methods is closer to each other. That is because the structure of complexes (the elements in a set) can provide much information for this task. Thus, the model could predict the activity value of complexes well without considering the interactions between the optimal subset and the complementary. However, our method can still achieve more satisfactory results than the other methods.%, which verifies the power of our framework.

\subsection{Computation Cost}
The main difference between \ours and EquiVSet is the additional information-sharing module to incorporate the representations of $V$. A possible concern is that the better performance of \ours might come from the extra parameters instead of our framework proposed. To address this concern, we conducted experiments on CelebA datasets. We add an additional convolution layer in the encoders to improve the capacity of EquiVSet. According to the location and size, we propose two variants of EquiVSet, details can be found in the appendix. We report the performance of models with different model sizes in Table \ref{tab:abaltion}. It is evident that \ours surpasses all the variants of EquiVSet, clearly demonstrating superior performance. Notably, the improvement achieved through the parameters is considerably less significant when compared to the substantial improvement resulting from the information aggregation process. This highlights the crucial role of information aggregation in driving the overall performance enhancement of \ours.

\subsection{Performance versus Training Epochs}
In addition to the notable improvement in the final MJC achieved by \ours, we have also observed that incorporating more information from the superset leads to enhanced training speed and better overall performance. To illustrate this, we present two figures depicting the validation performance against the number of training epochs for the Toys and Diaper datasets. It is evident that \ours achieves favorable performance in fewer training epochs. For instance, on the Toy dataset, \ours reaches the best performance of EquiVSet, at approximately epoch 18. Furthermore, around epoch 25, \ours approaches its optimal performance, while EquiVSet and Set Transformer attain their best performance around epoch 40. This highlights the efficiency and effectiveness of \ours in achieving competitive results within a shorter training time.

% \begin{figure}[!t]
% 	\centering
% 		\begin{minipage}[t]{0.95\linewidth}
% 			\centering
% 			\includegraphics[width=0.35\textwidth]{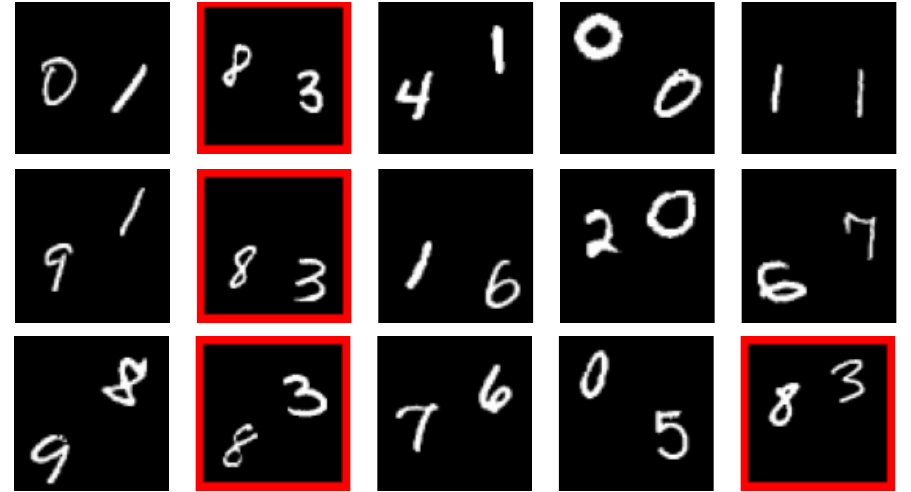}
% 			\includegraphics[width=0.45\textwidth]{}
% 	        \caption{}
	       
% 		\end{minipage}
% \end{figure}

% \begin{figure*}[t]
%     \centering
%     \subfigure[]{
%     \includegraphics[width=0.35\textwidth]{figs/minist.pdf}}
%     \subfigure[]{
%     \includegraphics[width=0.35\textwidth]{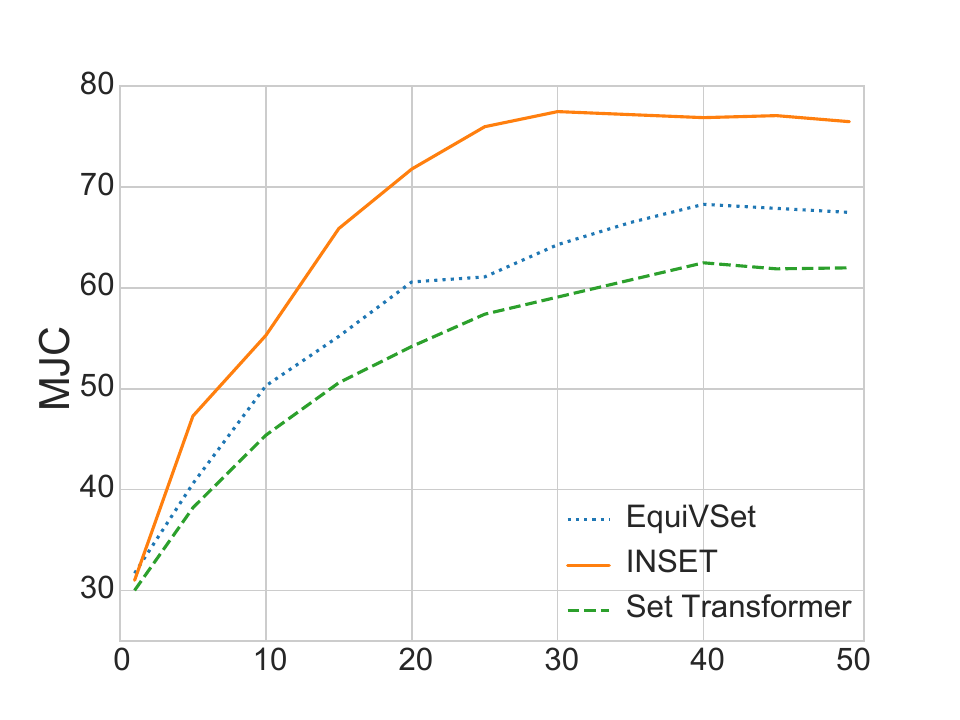}}
%     \caption{}
%      \label{fig:exp}
% \end{figure*}

\section{Conclusion}
In this study, we have identified a significant limitation in subset encoding methods, such as neural subset selection, where the output is either the subset itself or a function value associated with the subset. By incorporating the concept of permutation invariance, we reformulate this problem as the modeling of a conditional distribution $P(Y|S,V)$ that adheres to Property~\ref{property_inv}. Our theoretical analysis further reveals that to accomplish this objective, it is essential to construct a neural network based on the invariant sufficient representation of both $S$ and $V$. In response, we introduce \ours, a highly accurate and theoretical-driven approach for neural subset selection, which also consistently outperforms previous methods according to empirical evaluations.

\textbf{Limitations and Future Work.} \ours is a simple yet effective method in terms of implementation, indicating that there is still potential for further improvement by integrating additional information, such as pairwise interactions between elements. Furthermore, our theoretical analysis is not limited to set-based tasks; it can be applied to more general scenarios with expanded definitions and theoretical contributions. We acknowledge that these potential enhancements and extensions are left as future work, offering opportunities for further exploration and development.

\begin{figure}[!t]
\vspace{-0.5in}
    \begin{minipage}[!b]{0.32\linewidth}
        \centering
		\includegraphics[width=1.8in]{figs/minist.pdf}
    \end{minipage}
    \begin{minipage}[!b]{0.30\linewidth}
        \centering
		\includegraphics[width=1.8in]{figs/performance_plot.pdf}
    \end{minipage}
    \begin{minipage}[!b]{0.30\linewidth}
        \centering
		\includegraphics[width=1.8in]{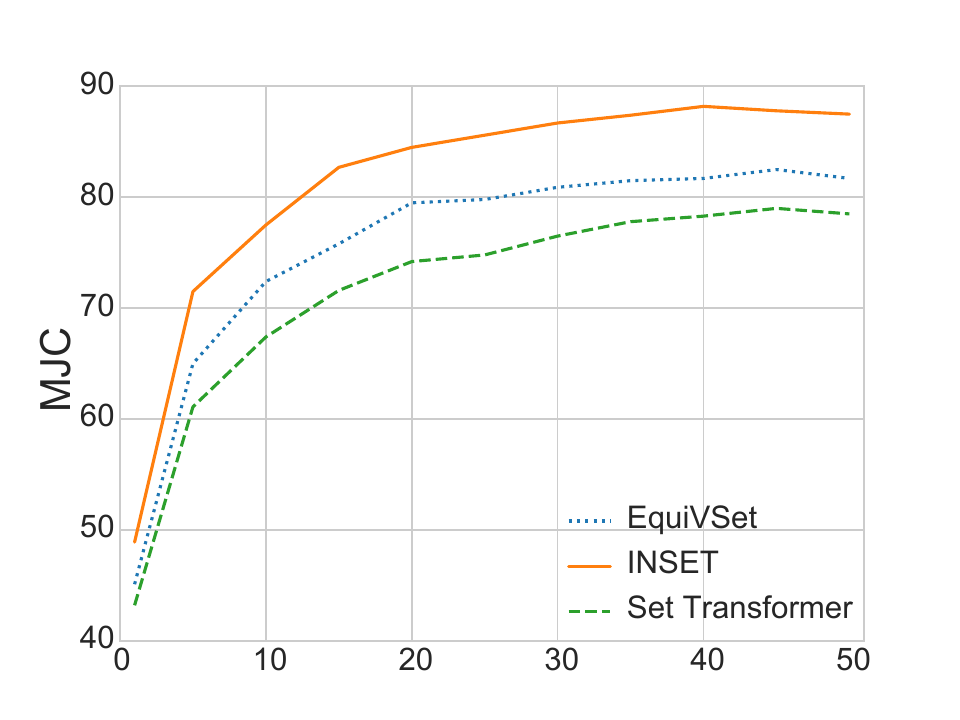}
    \end{minipage}
    \caption{\textbf{Left:} A sample from the Double MNIST dataset, comprising $|S^*|$ images displaying the same digit (indicated by the red box).
\textbf{Right:} The two figures on the right display the validation performance plotted against the number of epochs for Toys and Diaper datasets, respectively. The x-axis represents the epochs.}
% \vspace{-0.2in}
\end{figure}

\section{Acknowledgements}
We thanks the reviewers for their valuable comments. Additionally, we extend our thanks to Zijing Ou for his assistance with the code and datasets. This work was supported by CUHK direct grant 4055146. BH was supported by the NSFC General Program No. 62376235, Guangdong Basic and Applied Basic Research Foundation No. 2022A1515011652, Tencent AI Lab Rhino-Bird Gift Fund.

\bibliography{iclr2024_conference}

\begin{thebibliography}{52}
\providecommand{\natexlab}[1]{#1}
\providecommand{\url}[1]{\texttt{#1}}
\expandafter\ifx\csname urlstyle\endcsname\relax
  \providecommand{\doi}[1]{doi: #1}\else
  \providecommand{\doi}{doi: \begingroup \urlstyle{rm}\Url}\fi

\bibitem[Balcan \& Harvey(2018)Balcan and Harvey]{DBLP:journals/siamcomp/BalcanH18}
Maria{-}Florina Balcan and Nicholas J.~A. Harvey.
\newblock Submodular functions: Learnability, structure, and optimization.
\newblock \emph{{SIAM} J. Comput.}, 47\penalty0 (3):\penalty0 703--754, 2018.

\bibitem[Bloem{-}Reddy \& Teh(2020)Bloem{-}Reddy and Teh]{DBLP:journals/jmlr/Bloem-ReddyT20}
Benjamin Bloem{-}Reddy and Yee~Whye Teh.
\newblock Probabilistic symmetries and invariant neural networks.
\newblock \emph{J. Mach. Learn. Res.}, 21:\penalty0 90:1--90:61, 2020.

\bibitem[Bruno et~al.(2021)Bruno, Willette, Lee, and Hwang]{bruno2021mini}
Andreis Bruno, Jeffrey Willette, Juho Lee, and Sung~Ju Hwang.
\newblock Mini-batch consistent slot set encoder for scalable set encoding.
\newblock \emph{Advances in Neural Information Processing Systems}, 34:\penalty0 21365--21374, 2021.

\bibitem[Cangea et~al.(2018)Cangea, Veli{\v{c}}kovi{\'c}, Jovanovi{\'c}, Kipf, and Li{\`o}]{cangea2018towards}
C{\u{a}}t{\u{a}}lina Cangea, Petar Veli{\v{c}}kovi{\'c}, Nikola Jovanovi{\'c}, Thomas Kipf, and Pietro Li{\`o}.
\newblock Towards sparse hierarchical graph classifiers.
\newblock \emph{arXiv preprint arXiv:1811.01287}, 2018.

\bibitem[Chen et~al.(2020)Chen, Zou, Zeng, Cheng, Zhang, and Hoi]{chen2020exploring}
Cen Chen, Xiaofeng Zou, Zeng Zeng, Zhongyao Cheng, Le~Zhang, and Steven~CH Hoi.
\newblock Exploring structural knowledge for automated visual inspection of moving trains.
\newblock \emph{IEEE transactions on cybernetics}, 2020.

\bibitem[Defferrard et~al.(2016)Defferrard, Bresson, and Vandergheynst]{defferrard2016convolutional}
Micha{\"e}l Defferrard, Xavier Bresson, and Pierre Vandergheynst.
\newblock Convolutional neural networks on graphs with fast localized spectral filtering.
\newblock \emph{Advances in neural information processing systems}, 29:\penalty0 3844--3852, 2016.

\bibitem[Deng et~al.(2014)Deng, Ding, Jia, Frome, Murphy, Bengio, Li, Neven, and Adam]{deng2014large}
Jia Deng, Nan Ding, Yangqing Jia, Andrea Frome, Kevin Murphy, Samy Bengio, Yuan Li, Hartmut Neven, and Hartwig Adam.
\newblock Large-scale object classification using label relation graphs.
\newblock In \emph{European conference on computer vision}, pp.\  48--64. Springer, 2014.

\bibitem[Edwards \& Storkey(2016)Edwards and Storkey]{edwards2016towards}
Harrison Edwards and Amos Storkey.
\newblock Towards a neural statistician.
\newblock \emph{arXiv preprint arXiv:1606.02185}, 2016.

\bibitem[Ganea et~al.(2022)Ganea, Huang, Bunne, Bian, Barzilay, Jaakkola, and Krause]{ganea2022independent}
Octavian-Eugen Ganea, Xinyuan Huang, Charlotte Bunne, Yatao Bian, Regina Barzilay, Tommi~S. Jaakkola, and Andreas Krause.
\newblock Independent {SE}(3)-equivariant models for end-to-end rigid protein docking.
\newblock In \emph{International Conference on Learning Representations}, 2022.

\bibitem[Gao \& Ji(2019)Gao and Ji]{gao2019graph}
Hongyang Gao and Shuiwang Ji.
\newblock Graph u-nets.
\newblock In \emph{international conference on machine learning}, pp.\  2083--2092. PMLR, 2019.

\bibitem[Gillenwater et~al.(2014{\natexlab{a}})Gillenwater, Kulesza, Fox, and Taskar]{DBLP:conf/nips/GillenwaterKFT14}
Jennifer Gillenwater, Alex Kulesza, Emily~B. Fox, and Benjamin Taskar.
\newblock Expectation-maximization for learning determinantal point processes.
\newblock In \emph{Advances in Neural Information Processing Systems 27: Annual Conference on Neural Information Processing Systems 2014, December 8-13 2014, Montreal, Quebec, Canada}, pp.\  3149--3157, 2014{\natexlab{a}}.

\bibitem[Gillenwater et~al.(2014{\natexlab{b}})Gillenwater, Kulesza, Fox, and Taskar]{gillenwater2014expectation}
Jennifer~A Gillenwater, Alex Kulesza, Emily Fox, and Ben Taskar.
\newblock Expectation-maximization for learning determinantal point processes.
\newblock \emph{Advances in Neural Information Processing Systems}, 27, 2014{\natexlab{b}}.

\bibitem[Gimeno et~al.(2019)Gimeno, Ojeda-Montes, Tom{\'a}s-Hern{\'a}ndez, Cereto-Massagu{\'e}, Beltr{\'a}n-Deb{\'o}n, Mulero, Pujadas, and Garcia-Vallv{\'e}]{gimeno2019light}
Aleix Gimeno, Mar{\'\i}a~Jos{\'e} Ojeda-Montes, Sarah Tom{\'a}s-Hern{\'a}ndez, Adri{\`a} Cereto-Massagu{\'e}, Ra{\'u}l Beltr{\'a}n-Deb{\'o}n, Miquel Mulero, Gerard Pujadas, and Santiago Garcia-Vallv{\'e}.
\newblock The light and dark sides of virtual screening: what is there to know?
\newblock \emph{International journal of molecular sciences}, 20\penalty0 (6):\penalty0 1375, 2019.

\bibitem[Gomes et~al.(2017)Gomes, Ramsundar, Feinberg, and Pande]{gomes2017atomic}
Joseph Gomes, Bharath Ramsundar, Evan~N Feinberg, and Vijay~S Pande.
\newblock Atomic convolutional networks for predicting protein-ligand binding affinity.
\newblock \emph{arXiv preprint arXiv:1703.10603}, 2017.

\bibitem[Halmos \& Savage(1949)Halmos and Savage]{halmos1949application}
Paul~R Halmos and Leonard~J Savage.
\newblock Application of the radon-nikodym theorem to the theory of sufficient statistics.
\newblock \emph{The Annals of Mathematical Statistics}, 20\penalty0 (2):\penalty0 225--241, 1949.

\bibitem[Han et~al.(2022)Han, Rong, Xu, Sun, and Huang]{han2022equivariant}
Jiaqi Han, Yu~Rong, Tingyang Xu, Fuchun Sun, and Wenbing Huang.
\newblock Equivariant graph hierarchy-based neural networks.
\newblock \emph{arXiv preprint arXiv:2202.10643}, 2022.

\bibitem[Horn et~al.(2020)Horn, Moor, Bock, Rieck, and Borgwardt]{pmlr-v119-horn20a}
Max Horn, Michael Moor, Christian Bock, Bastian Rieck, and Karsten Borgwardt.
\newblock Set functions for time series.
\newblock In Hal~Daumé III and Aarti Singh (eds.), \emph{Proceedings of the 37th International Conference on Machine Learning}, volume 119 of \emph{Proceedings of Machine Learning Research}, pp.\  4353--4363. PMLR, 13--18 Jul 2020.

\bibitem[Huang et~al.(2019)Huang, Li, Li, Liu, and Li]{9009471}
Jingjia Huang, Zhangheng Li, Nannan Li, Shan Liu, and Ge~Li.
\newblock Attpool: Towards hierarchical feature representation in graph convolutional networks via attention mechanism.
\newblock In \emph{2019 IEEE/CVF International Conference on Computer Vision (ICCV)}, pp.\  6479--6488, 2019.

\bibitem[Ji et~al.(2022)Ji, Zhang, Wu, Wu, Huang, Xu, Rong, Li, Ren, Xue, Lai, Xu, Feng, Liu, Luo, Zhou, Huang, Zhao, and Bian]{DBLP:journals/corr/abs-2201-09637}
Yuanfeng Ji, Lu~Zhang, Jiaxiang Wu, Bingzhe Wu, Long{-}Kai Huang, Tingyang Xu, Yu~Rong, Lanqing Li, Jie Ren, Ding Xue, Houtim Lai, Shaoyong Xu, Jing Feng, Wei Liu, Ping Luo, Shuigeng Zhou, Junzhou Huang, Peilin Zhao, and Yatao Bian.
\newblock Drugood: Out-of-distribution {(OOD)} dataset curator and benchmark for ai-aided drug discovery - {A} focus on affinity prediction problems with noise annotations.
\newblock \emph{CoRR}, abs/2201.09637, 2022.

\bibitem[Jin et~al.(2020)Jin, Barzilay, and Jaakkola]{jin2020hierarchical}
Wengong Jin, Regina Barzilay, and Tommi Jaakkola.
\newblock Hierarchical generation of molecular graphs using structural motifs.
\newblock In \emph{International conference on machine learning}, pp.\  4839--4848. PMLR, 2020.

\bibitem[Kallenberg(2002)]{kallenberg1997foundations}
Olav Kallenberg.
\newblock \emph{Foundations of modern probability}, volume~2.
\newblock Springer, 2002.

\bibitem[Kallenberg et~al.(2017)]{kallenberg2017random}
Olav Kallenberg et~al.
\newblock \emph{Random measures, theory and applications}, volume~1.
\newblock Springer, 2017.

\bibitem[Kim(2021)]{kim2021differentiable}
Minyoung Kim.
\newblock Differentiable expectation-maximization for set representation learning.
\newblock In \emph{International Conference on Learning Representations}, 2021.

\bibitem[Kingma \& Ba(2014)Kingma and Ba]{kingma2014adam}
Diederik~P Kingma and Jimmy Ba.
\newblock Adam: A method for stochastic optimization.
\newblock \emph{arXiv preprint arXiv:1412.6980}, 2014.

\bibitem[Lee et~al.(2019{\natexlab{a}})Lee, Lee, Kim, Kosiorek, Choi, and Teh]{lee2019set}
Juho Lee, Yoonho Lee, Jungtaek Kim, Adam Kosiorek, Seungjin Choi, and Yee~Whye Teh.
\newblock Set transformer: A framework for attention-based permutation-invariant neural networks.
\newblock In \emph{International conference on machine learning}, pp.\  3744--3753. PMLR, 2019{\natexlab{a}}.

\bibitem[Lee et~al.(2019{\natexlab{b}})Lee, Lee, Kim, Kosiorek, Choi, and Teh]{DBLP:conf/icml/LeeLKKCT19}
Juho Lee, Yoonho Lee, Jungtaek Kim, Adam~R. Kosiorek, Seungjin Choi, and Yee~Whye Teh.
\newblock Set transformer: {A} framework for attention-based permutation-invariant neural networks.
\newblock In \emph{Proceedings of the 36th International Conference on Machine Learning, {ICML} 2019, 9-15 June 2019, Long Beach, California, {USA}}, volume~97 of \emph{Proceedings of Machine Learning Research}, pp.\  3744--3753. {PMLR}, 2019{\natexlab{b}}.

\bibitem[Li et~al.(2021)Li, Zhou, Xu, Huang, Wang, Xiong, Huang, Dou, and Xiong]{DBLP:conf/kdd/LiZ0HWXHD021}
Shuangli Li, Jingbo Zhou, Tong Xu, Liang Huang, Fan Wang, Haoyi Xiong, Weili Huang, Dejing Dou, and Hui Xiong.
\newblock Structure-aware interactive graph neural networks for the prediction of protein-ligand binding affinity.
\newblock In \emph{{KDD} '21: The 27th {ACM} {SIGKDD} Conference on Knowledge Discovery and Data Mining, Virtual Event, Singapore, August 14-18, 2021}, pp.\  975--985. {ACM}, 2021.

\bibitem[Li et~al.(2020)Li, Yi, Bender, Shan, and Oliva]{li2020exchangeable}
Yang Li, Haidong Yi, Christopher Bender, Siyuan Shan, and Junier~B Oliva.
\newblock Exchangeable neural ode for set modeling.
\newblock \emph{Advances in Neural Information Processing Systems}, 33:\penalty0 6936--6946, 2020.

\bibitem[Liu et~al.(2007)Liu, Lin, Wen, Jorissen, and Gilson]{liu2007bindingdb}
Tiqing Liu, Yuhmei Lin, Xin Wen, Robert~N Jorissen, and Michael~K Gilson.
\newblock Bindingdb: a web-accessible database of experimentally determined protein--ligand binding affinities.
\newblock \emph{Nucleic acids research}, 35\penalty0 (suppl\_1):\penalty0 D198--D201, 2007.

\bibitem[Liu et~al.(2015{\natexlab{a}})Liu, Li, Han, Li, Liu, Zhao, Nie, Liu, and Wang]{liu2015pdb}
Zhihai Liu, Yan Li, Li~Han, Jie Li, Jie Liu, Zhixiong Zhao, Wei Nie, Yuchen Liu, and Renxiao Wang.
\newblock Pdb-wide collection of binding data: current status of the pdbbind database.
\newblock \emph{Bioinformatics}, 31\penalty0 (3):\penalty0 405--412, 2015{\natexlab{a}}.

\bibitem[Liu et~al.(2015{\natexlab{b}})Liu, Luo, Wang, and Tang]{liu2015deep}
Ziwei Liu, Ping Luo, Xiaogang Wang, and Xiaoou Tang.
\newblock Deep learning face attributes in the wild.
\newblock In \emph{Proceedings of the IEEE international conference on computer vision}, pp.\  3730--3738, 2015{\natexlab{b}}.

\bibitem[Maron et~al.(2020)Maron, Litany, Chechik, and Fetaya]{DBLP:conf/icml/MaronLCF20}
Haggai Maron, Or~Litany, Gal Chechik, and Ethan Fetaya.
\newblock On learning sets of symmetric elements.
\newblock In \emph{Proceedings of the 37th International Conference on Machine Learning, {ICML} 2020, 13-18 July 2020, Virtual Event}, volume 119 of \emph{Proceedings of Machine Learning Research}, pp.\  6734--6744. {PMLR}, 2020.

\bibitem[Murphy et~al.(2018)Murphy, Srinivasan, Rao, and Ribeiro]{murphy2018janossy}
Ryan~L Murphy, Balasubramaniam Srinivasan, Vinayak Rao, and Bruno Ribeiro.
\newblock Janossy pooling: Learning deep permutation-invariant functions for variable-size inputs.
\newblock \emph{arXiv preprint arXiv:1811.01900}, 2018.

\bibitem[Murthy et~al.(2016)Murthy, Singh, Chen, Manmatha, and Comaniciu]{murthy2016deep}
Venkatesh~N Murthy, Vivek Singh, Terrence Chen, R~Manmatha, and Dorin Comaniciu.
\newblock Deep decision network for multi-class image classification.
\newblock In \emph{Proceedings of the IEEE conference on computer vision and pattern recognition}, pp.\  2240--2248, 2016.

\bibitem[Ou et~al.(2022)Ou, Xu, Su, Li, Zhao, and Bian]{ou2022learning}
Zijing Ou, Tingyang Xu, Qinliang Su, Yingzhen Li, Peilin Zhao, and Yatao Bian.
\newblock Learning neural set functions under the optimal subset oracle.
\newblock \emph{NeurIPS}, 2022.

\bibitem[{\"O}zt{\"u}rk et~al.(2018){\"O}zt{\"u}rk, {\"O}zg{\"u}r, and Ozkirimli]{ozturk2018deepdta}
Hakime {\"O}zt{\"u}rk, Arzucan {\"O}zg{\"u}r, and Elif Ozkirimli.
\newblock Deepdta: deep drug--target binding affinity prediction.
\newblock \emph{Bioinformatics}, 34\penalty0 (17):\penalty0 i821--i829, 2018.

\bibitem[Qi et~al.(2017{\natexlab{a}})Qi, Su, Mo, and Guibas]{DBLP:conf/cvpr/QiSMG17}
Charles~Ruizhongtai Qi, Hao Su, Kaichun Mo, and Leonidas~J. Guibas.
\newblock Pointnet: Deep learning on point sets for 3d classification and segmentation.
\newblock In \emph{2017 {IEEE} Conference on Computer Vision and Pattern Recognition, {CVPR} 2017, Honolulu, HI, USA, July 21-26, 2017}, pp.\  77--85. {IEEE} Computer Society, 2017{\natexlab{a}}.

\bibitem[Qi et~al.(2017{\natexlab{b}})Qi, Yi, Su, and Guibas]{DBLP:conf/nips/QiYSG17}
Charles~Ruizhongtai Qi, Li~Yi, Hao Su, and Leonidas~J. Guibas.
\newblock Pointnet++: Deep hierarchical feature learning on point sets in a metric space.
\newblock In \emph{Advances in Neural Information Processing Systems 30: Annual Conference on Neural Information Processing Systems 2017, December 4-9, 2017, Long Beach, CA, {USA}}, pp.\  5099--5108, 2017{\natexlab{b}}.

\bibitem[Ravanbakhsh et~al.(2017)Ravanbakhsh, Schneider, and Poczos]{ravanbakhsh2017equivariance}
Siamak Ravanbakhsh, Jeff Schneider, and Barnabas Poczos.
\newblock Equivariance through parameter-sharing.
\newblock In \emph{International conference on machine learning}, pp.\  2892--2901. PMLR, 2017.

\bibitem[Ren et~al.(2019)Ren, Cheng, Wang, Cui, and Zhang]{ren2019multi}
Lei Ren, Xuejun Cheng, Xiaokang Wang, Jin Cui, and Lin Zhang.
\newblock Multi-scale dense gate recurrent unit networks for bearing remaining useful life prediction.
\newblock \emph{Future generation computer systems}, 94:\penalty0 601--609, 2019.

\bibitem[Ren et~al.(2020)Ren, Liu, Wang, L{\"u}, and Deen]{ren2020cloud}
Lei Ren, Yuxin Liu, Xiaokang Wang, Jinhu L{\"u}, and M~Jamal Deen.
\newblock Cloud--edge-based lightweight temporal convolutional networks for remaining useful life prediction in iiot.
\newblock \emph{IEEE Internet of Things Journal}, 8\penalty0 (16):\penalty0 12578--12587, 2020.

\bibitem[Rezatofighi et~al.(2017)Rezatofighi, BG, Milan, Abbasnejad, Dick, and Reid]{rezatofighi2017deepsetnet}
S~Hamid Rezatofighi, Vijay~Kumar BG, Anton Milan, Ehsan Abbasnejad, Anthony Dick, and Ian Reid.
\newblock Deepsetnet: Predicting sets with deep neural networks.
\newblock In \emph{2017 IEEE International Conference on Computer Vision (ICCV)}, pp.\  5257--5266. IEEE, 2017.

\bibitem[Sun(2019)]{mulitdigitmnist}
Shao-Hua Sun.
\newblock Multi-digit mnist for few-shot learning, 2019.

\bibitem[Tschiatschek et~al.(2018)Tschiatschek, Sahin, and Krause]{tschiatschek2018differentiable}
Sebastian Tschiatschek, Aytunc Sahin, and Andreas Krause.
\newblock Differentiable submodular maximization.
\newblock \emph{arXiv preprint arXiv:1803.01785}, 2018.

\bibitem[Wang et~al.(2020)Wang, Albooyeh, and Ravanbakhsh]{NEURIPS2020_Wang}
Renhao Wang, Marjan Albooyeh, and Siamak Ravanbakhsh.
\newblock Equivariant networks for hierarchical structures.
\newblock In H.~Larochelle, M.~Ranzato, R.~Hadsell, M.F. Balcan, and H.~Lin (eds.), \emph{Advances in Neural Information Processing Systems}, volume~33, pp.\  13806--13817. Curran Associates, Inc., 2020.

\bibitem[Willette et~al.(2022)Willette, Bruno, Lee, and Hwang]{willette2022universal}
Jeffrey Willette, Andreis Bruno, Juho Lee, and Sung~Ju Hwang.
\newblock Universal mini-batch consistency for set encoding functions.
\newblock \emph{arXiv preprint arXiv:2208.12401}, 2022.

\bibitem[Xiao et~al.(2017)Xiao, Rasul, and Vollgraf]{xiao2017fashion}
Han Xiao, Kashif Rasul, and Roland Vollgraf.
\newblock Fashion-mnist: a novel image dataset for benchmarking machine learning algorithms.
\newblock \emph{arXiv preprint arXiv:1708.07747}, 2017.

\bibitem[Xiao et~al.(2014)Xiao, Zhang, Yang, Peng, and Zhang]{xiao2014error}
Tianjun Xiao, Jiaxing Zhang, Kuiyuan Yang, Yuxin Peng, and Zheng Zhang.
\newblock Error-driven incremental learning in deep convolutional neural network for large-scale image classification.
\newblock In \emph{Proceedings of the 22nd ACM international conference on Multimedia}, pp.\  177--186, 2014.

\bibitem[Ying et~al.(2018{\natexlab{a}})Ying, You, Morris, Ren, Hamilton, and Leskovec]{NEURIPS2018_e77dbaf6}
Zhitao Ying, Jiaxuan You, Christopher Morris, Xiang Ren, Will Hamilton, and Jure Leskovec.
\newblock Hierarchical graph representation learning with differentiable pooling.
\newblock In S.~Bengio, H.~Wallach, H.~Larochelle, K.~Grauman, N.~Cesa-Bianchi, and R.~Garnett (eds.), \emph{Advances in Neural Information Processing Systems}, volume~31. Curran Associates, Inc., 2018{\natexlab{a}}.

\bibitem[Ying et~al.(2018{\natexlab{b}})Ying, You, Morris, Ren, Hamilton, and Leskovec]{ying2018hierarchical}
Zhitao Ying, Jiaxuan You, Christopher Morris, Xiang Ren, Will Hamilton, and Jure Leskovec.
\newblock Hierarchical graph representation learning with differentiable pooling.
\newblock \emph{Advances in neural information processing systems}, 31, 2018{\natexlab{b}}.

\bibitem[Zaheer et~al.(2017)Zaheer, Kottur, Ravanbakhsh, P{\'{o}}czos, Salakhutdinov, and Smola]{zaheer2017deep}
Manzil Zaheer, Satwik Kottur, Siamak Ravanbakhsh, Barnab{\'{a}}s P{\'{o}}czos, Ruslan Salakhutdinov, and Alexander~J. Smola.
\newblock Deep sets.
\newblock In \emph{Advances in Neural Information Processing Systems 30: Annual Conference on Neural Information Processing Systems 2017, December 4-9, 2017, Long Beach, CA, {USA}}, pp.\  3391--3401, 2017.

\bibitem[Zhang et~al.(2020)Zhang, Burghouts, and Snoek]{zhang2020set}
David~W Zhang, Gertjan~J Burghouts, and Cees~GM Snoek.
\newblock Set prediction without imposing structure as conditional density estimation.
\newblock \emph{arXiv preprint arXiv:2010.04109}, 2020.

\end{thebibliography}
\bibliographystyle{iclr2024_conference}

\newpage

\appendix
\section{Proof of Theorem 3.5}
\subsection{The Connection between Functional and Probabilistic Symmetries}
To prove Theorem~\ref{thm:inv}, the main objective is to establish a relationship between the conditional distribution $P(Y|S,V)$ and a functional representation of samples generated from $P(Y|S,V)$ in terms of $(S,V)$ and independent noise $\noise$. This functional representation can be expressed as $Y \equas f(\noise, S,V)$, where $f$ captures the underlying relationship between the variables.

In order to achieve this goal, a valuable technique called the \textit{transducer} \citep{kallenberg1997foundations} or \textit{Noise Outsourcing} \citep{DBLP:journals/jmlr/Bloem-ReddyT20} comes into play. This technique allows us to effectively connect the conditional distribution with the functional representation by leveraging the concept of independent noise variables. By applying the transducer or Noise Outsourcing approach, we can establish a clear mapping between the observed variables $(S,V)$, the independent noise $\noise$, and the resulting output $Y$. More formally, we give the corresponding lemma as:

\vskip 0.15in
\begin{lemma}[Conditional independence and randomization] \label{lem:cond:ind}
  Let $S,V,Y, Z$ be random elements in some measurable spaces $\gS, \gV, \gY,\gZ$, respectively, where $\gY$ is Borel. Then $Y\condind_{Z} (S,V)$ if and only if $Y \equas f(\noise,Z)$ for some measurable function $f : [0,1] \times \gS \times \gV \to \gY$ and some uniform random variable $\noise\condind (S,V,Z)$.
\end{lemma}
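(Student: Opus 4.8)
My plan is to prove the two implications separately, treating the forward (``only if'') direction as the substantive one and the reverse as a short verification; throughout I will lean on the hypothesis that $\gY$ is Borel, since that is exactly what makes regular conditional distributions and a uniform-noise representation available. For the easy direction ($\Leftarrow$), suppose $Y \equas f(\noise, Z)$ with $\noise \condind (S,V,Z)$. First I note that $\noise\condind(S,V,Z)$ entails $\noise\condind (S,V)\mid Z$. Then, for bounded measurable $g:\gY\to\bbR$ and $h:\gS\times\gV\to\bbR$, I would compute $\bbE[\,g(Y)\,h(S,V)\mid Z\,]=\bbE[\,g(f(\noise,Z))\,h(S,V)\mid Z\,]$ and factor it as $\bbE[\,g(f(\noise,Z))\mid Z\,]\,\bbE[\,h(S,V)\mid Z\,]$, using that, conditionally on $Z$, the term $g(f(\noise,Z))$ is a function of $\noise$ while $h(S,V)$ is a function of $(S,V)$, and these two are conditionally independent given $Z$. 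That factorization is precisely the assertion $Y\condind_Z(S,V)$.

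For the main direction ($\Rightarrow$), the strategy is a conditional version of the transducer / Noise Outsourcing construction. Step one: since $\gY$ is Borel, extract a regular conditional distribution, i.e.\ a Markov kernel $\kappa$ with $\kappa(Z,\argdot)=P(Y\in\argdot\mid Z)$ almost surely. Step two: invoke the hypothesis $Y\condind_Z(S,V)$ to conclude that conditioning additionally on $(S,V)$ does not alter this law, namely $P(Y\in\argdot\mid S,V,Z)=\kappa(Z,\argdot)$ a.s.; this is exactly the point at which the function I build will be permitted to depend on $Z$ alone rather than on all of $(S,V,Z)$. Step three: apply the kernel-representation lemma (again using that $\gY$ is Borel) to obtain a measurable $f:[0,1]\times\gZ\to\gY$ such that, for $u\sim\Unif[0,1]$, the law of $f(u,z)$ equals $\kappa(z,\argdot)$ for each $z$. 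Adjoining an independent $\noise\sim\Unif[0,1]$ with $\noise\condind(S,V,Z,Y)$ and setting $Y'=f(\noise,Z)$, the conditional law of $Y'$ given $(S,V,Z)$ is $\kappa(Z,\argdot)$, which matches that of $Y$ by Step two; hence $(S,V,Z,Y')\equdist(S,V,Z,Y)$.

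The remaining, and most delicate, step is upgrading this equality in distribution to the almost-sure equality $Y\equas f(\noise,Z)$ asserted in the statement. Here I would appeal to the transfer theorem \citep{kallenberg1997foundations}, which re-realizes the uniform randomizer on (an extension of) the original probability space: from $(S,V,Z,f(\noise,Z))\equdist(S,V,Z,Y)$ one produces a uniform $\noise^{\star}\condind(S,V,Z)$ for which $Y\equas f(\noise^{\star},Z)$ holds, and the independence $\noise^{\star}\condind(S,V,Z)$ is precisely the one demanded by the lemma. I expect this a.s.-upgrade, rather than the kernel construction itself, to be the main obstacle, since it is where one must pass carefully from a distributional identity to a pathwise one; the conditional-independence reduction in Step two and the Borel assumption on $\gY$ are what make each ingredient available and let the outsourced noise carry the full independence needed downstream in Theorem~\ref{thm:inv}.
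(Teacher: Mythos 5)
Your proof is correct, but it is worth being clear about what the paper itself does here: it gives \emph{no proof} of Lemma~\ref{lem:cond:ind} at all. The lemma is imported as a known technical tool --- it is the conditional randomization (``transducer'') result of \citet{kallenberg1997foundations}, restated as the noise-outsourcing lemma in \citet{DBLP:journals/jmlr/Bloem-ReddyT20} --- and the paper simply cites these sources before using the lemma to prove Theorem~\ref{thm:inv}. What you wrote is a faithful reconstruction of the standard argument from those references, and each step checks out: the reverse direction by factoring $\bbE[g(Y)h(S,V)\mid Z]$ using $\noise\condind(S,V)\mid Z$ (which indeed follows from $\noise\condind(S,V,Z)$); the forward direction by (i) extracting a regular conditional distribution $\kappa(Z,\argdot)$ of $Y$ given $Z$, which exists because $\gY$ is Borel, (ii) using $Y\condind_Z(S,V)$ to identify $P(Y\in\argdot\mid S,V,Z)$ with $\kappa(Z,\argdot)$, (iii) the randomization lemma to realize $\kappa$ as the law of $f(\noise,z)$ for $\noise\sim\Unif[0,1]$, and (iv) the transfer theorem to convert the equality in distribution of $(S,V,Z,f(\noise,Z))$ and $(S,V,Z,Y)$ into the pathwise statement $Y\equas f(\noise^{\star},Z)$ with $\noise^{\star}\condind(S,V,Z)$. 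Your step (iv) is the genuinely delicate point, and you resolve it correctly: almost-sure equality transfers across the distributional identity because the diagonal of $\gY\times\gY$ is measurable when $\gY$ is Borel, and the independence of $\noise^{\star}$ transfers because independence is a property of the joint law. Two minor remarks: like Kallenberg's proof, yours requires passing to an extension of the underlying probability space to host the auxiliary uniform variable (you flag this, and the lemma should be read as permitting it); and you silently repair a typo in the paper's statement --- the function must have domain $[0,1]\times\gZ$, as you typed it, not $[0,1]\times\gS\times\gV$.
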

In the main text, we put forth the idea of utilizing an invariant sufficient representation $M(S,V)$ as an alternative to $(S,V)$ to ensure compliance with symmetry groups. This representation captures the essential information while discarding unnecessary details, making it well-suited for addressing the challenges posed by symmetry.

By employing the invariant sufficient representation $M(S,V)$, we can redefine and refine the aforementioned lemma in a more concise and expressive manner. This approach allows us to establish a direct connection between the conditional distribution $P(Y|S,V)$ and the functional representation $f(\noise, M(S,V))$, where $\noise$ represents independent noise variables. The use of $M(S,V)$ as a replacement for $(S,V)$ enables us to effectively model and analyze the relationship between the input variables and the desired output.
\vskip 0.15in
\begin{lemma} 
  \label{lem:noise:out:suff}
  Let $S, V$ and $Y$ be random variables with joint distribution $P(S,V,Y)$. Assume there exists a mapping $M : \gS \times \gV \to \gM$, then $M(S,V)$ d-separates $(S,V)$ and $Y$ if and only if there is a measurable function $f : [0,1] \times \gS \times \gV \to \gY$ such that
  \begin{align}\nonumber
    (S,V,Y) \equas (S,V,f(\noise ,M(S,V))) \quad \text{where} \quad \noise\sim\Unif[0,1] \quad \text{and} \quad \noise \condind X \;.
  \end{align}
  In particular, $Y = f(\noise, M(S,V))$ has distribution $P(Y|S,V)$.
\end{lemma}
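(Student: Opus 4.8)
The plan is to derive Lemma~\ref{lem:noise:out:suff} as a direct specialization of the conditional-independence randomization result, Lemma~\ref{lem:cond:ind}, taking the conditioning variable to be $Z := M(S,V)$. The first observation is that the phrase ``$M(S,V)$ d-separates $(S,V)$ and $Y$'' is, by definition, the conditional independence statement $Y \condind_{M(S,V)} (S,V)$, which is exactly the hypothesis format of Lemma~\ref{lem:cond:ind} with $Z = M(S,V)$. Since we work throughout in standard Borel spaces, $\gM$ is Borel and $M(S,V)$ is a bona fide Borel-space random element, so the lemma is applicable.

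For the forward direction I would apply the ``only if'' half of Lemma~\ref{lem:cond:ind} with $Z = M(S,V)$ to obtain a measurable $g : [0,1] \times \gM \to \gY$ and a uniform $\noise \condind (S,V,M(S,V))$ with $Y \equas g(\noise, M(S,V))$. Taking $f$ to be the composition of $g$ with $M$ (so that $f$ depends on $(s,v)$ only through $M(s,v)$) produces a measurable map of the required form for which $f(\noise, M(S,V)) \equas Y$; since $\noise \condind (S,V,M(S,V))$ in particular yields $\noise \condind (S,V)$, the stated independence holds, and the almost-sure equality upgrades to the joint statement $(S,V,Y) \equas (S,V, f(\noise, M(S,V)))$.

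For the reverse direction I would start from the assumed functional representation with $\noise \condind (S,V)$ and note that $\sigma(M(S,V)) \subseteq \sigma(S,V)$, so adjoining the function $M(S,V)$ to the conditioning set does not disturb independence: $\noise \condind (S,V,M(S,V))$. This places us in exactly the hypotheses of the ``if'' half of Lemma~\ref{lem:cond:ind} with $Z = M(S,V)$, which returns $Y \condind_{M(S,V)} (S,V)$, i.e. the d-separation claim. Concretely, the mechanism is that for any bounded measurable $\phi$ on $\gY$, conditioning on $(S,V)$ and integrating out the independent uniform noise gives $\E[\phi(Y)\mid S,V] = \int_0^1 \phi(f(u,M(S,V)))\,du$, a function of $M(S,V)$ alone; by the tower property this coincides with $\E[\phi(Y)\mid M(S,V)]$, and equality of these conditional expectations for all $\phi$ is the conditional independence. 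Finally, the ``in particular'' clause is immediate: matching joint laws in $(S,V,Y)\equas(S,V,f(\noise,M(S,V)))$ forces the conditional law of $f(\noise,M(S,V))$ given $(S,V)$ to coincide with $P(Y\mid S,V)$.

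I do not anticipate a serious obstacle, since Lemma~\ref{lem:cond:ind} already supplies the difficult noise-construction step and the remaining work is essentially bookkeeping. The one point requiring care is the measurability and independence accounting around $M$: verifying that $Z = M(S,V)$ is Borel-valued so the randomization lemma applies, composing $g$ with $M$ correctly in the forward direction, and exploiting $\sigma(M(S,V))\subseteq\sigma(S,V)$ to reconcile the two forms of noise-independence, namely $\noise\condind(S,V)$ versus $\noise\condind(S,V,M(S,V))$, in the reverse direction.
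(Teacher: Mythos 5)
Your proof is correct and takes essentially the same route as the paper, which obtains Lemma~\ref{lem:noise:out:suff} precisely as a specialization of Lemma~\ref{lem:cond:ind} with $Z = M(S,V)$, identifying the d-separation statement with the conditional independence $Y \condind_{M(S,V)} (S,V)$. The bookkeeping you supply --- composing the outsourced function with $M$ in the forward direction, and using $\sigma(M(S,V)) \subseteq \sigma(S,V)$ to reconcile $\noise \condind (S,V)$ with $\noise \condind (S,V,M(S,V))$ in the reverse direction --- is exactly the detail the paper leaves implicit.
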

This lemma implies that if the invariant sufficient representation is capable of d-separating $(S,V)$ and $Y$, it will yield the equation presented in Eq.\ref{eq:theory} of Theorem\ref{thm:inv}. Subsequently, we must establish why the conditional distribution $P(Y|S,V)$ remains invariant under the group $\grp$ if and only if an invariant sufficient representation $M$ exists. This overarching concept can be succinctly summarized in the following lemma:
\vskip 0.15in
\begin{lemma} \label{lem:maximal:invariant}
  Let $\gS, \gV$ and $\gY$ be Borel spaces, $\grp$ a compact group acting measurably on $(\gS,\gV)$, and $M : \gS \times \gV \to \gY$ a invariant sufficient representation on $(\gS, \gV)$ under $\grp$. If $(S,V)$ is a random element of $(\gS,\gV)$, then its distribution $P(S,V)$ is $\grp$-invariant if and only if
  \begin{align} \label{eq:suff:maximal:invariant}
    P((S,V) \in \argdot \mid M(S,V) = m) = q(\argdot,m) \;,
  \end{align}
  for some Markov kernel $q : \borel_{\gS \times \gV} \times \gM \to \bbR_+$. 
  If $P(S,V)$ is $\grp$-invariant and $Y$ is any other random variable, then $P(Y|S,V)$ is $\grp$-invariant if and only if $Y \condind_{M(S,V)} (S,V)$.
\end{lemma}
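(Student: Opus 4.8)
The plan is to prove both equivalences by exploiting the fact that the invariant sufficient representation $M$ is a \emph{maximal invariant}: by Definition~\ref{def:max_invt}, $M(s_1,v_1)=M(s_2,v_2)$ holds exactly when $(s_1,v_1)$ and $(s_2,v_2)$ lie in the same $\grp$-orbit. Compactness of $\grp$ is what lets me turn this orbit structure into honest probability kernels, via the normalized Haar measure $\normhaar$ on $\grp$. Concretely, for each point $(s,v)$ I define the \emph{orbit law} as the pushforward of $\normhaar$ under $g \mapsto g\cdot(s,v)$; since $M$ is constant on orbits this depends only on $M(s,v)=m$, and I denote the resulting object by $q(\argdot,m)$. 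Invariance of Haar measure makes each $q(\argdot,m)$ a $\grp$-invariant probability measure supported on the orbit indexed by $m$, and measurability of the action guarantees $q$ is a Markov kernel $\borel_{\gS\times\gV}\times\gM\to\bbR_+$.

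For the first equivalence, the direction establishing invariance from the kernel representation is the easier one: if $P((S,V)\in\argdot \mid M(S,V)=m)=q(\argdot,m)$, then I disintegrate $P(S,V)$ as first drawing $m$ from the law $\mu_M$ of $M(S,V)$ and then drawing $(S,V)$ from $q(\argdot,m)$; since each $q(\argdot,m)$ is $\grp$-invariant and $M$ is invariant, the mixture $P(S,V)$ is $\grp$-invariant. Conversely, starting from a $\grp$-invariant $P(S,V)$, I take a regular conditional distribution $P((S,V)\in\argdot\mid M(S,V)=m)$ (which exists because the spaces are Borel) and show it must coincide $\mu_M$-a.e.\ with $q(\argdot,m)$. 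The argument is that invariance of $P(S,V)$ forces this conditional law to be $\grp$-invariant while being supported on the orbit $M^{-1}(m)$; on a compact group the unique such probability measure is the orbit law, which pins down $q$ and simultaneously establishes the sufficiency claim of Definition~\ref{def:pred:suff}.

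For the second equivalence I assume $P(S,V)$ is $\grp$-invariant. The easy direction is a short computation: if $Y\condind_{M(S,V)}(S,V)$ then $P(Y\mid S,V)=P(Y\mid M(S,V))$, and because $M$ is invariant, $M(g\cdot(S,V))=M(S,V)$, so $P(Y\mid g\cdot(S,V))=P(Y\mid S,V)$ for every $g\in\grp$, which is exactly $\grp$-invariance of $P(Y\mid S,V)$. The substantive direction is the converse: from $\grp$-invariance of $P(Y\mid S,V)$ I must conclude that $P(Y\mid S,V)$ is a function of $M(S,V)$ alone. Invariance says $P(Y\mid S=s,V=v)$ is constant as $(s,v)$ ranges over a single orbit; since $M$ is a maximal invariant, constancy on orbits is precisely the statement that the conditional law factors through $M$, i.e.\ $P(Y\mid S,V)=P(Y\mid M(S,V))$, which is the conditional independence $Y\condind_{M(S,V)}(S,V)$.

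The main obstacle I anticipate is the converse direction of the first equivalence: making rigorous the claim that a $\grp$-invariant measure supported on a single orbit equals the orbit law, and that this holds for $\mu_M$-almost every $m$ in a jointly measurable fashion. This is where compactness of $\grp$ is essential, since it guarantees that orbits are closed, that the Haar average is well defined, and that the invariant measure on each orbit is unique; the remaining care is to assemble these fiberwise statements into a single Markov kernel identity, which I would carry out by a measurable selection and disintegration argument over the quotient parametrized by $M$.
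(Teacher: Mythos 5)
Your argument is correct in outline, and it takes essentially the same route as the proof the paper relies on: the paper itself gives no proof of this lemma, deferring to Lemma 20 of \citet{DBLP:journals/jmlr/Bloem-ReddyT20}, whose argument is exactly yours --- push the normalized Haar measure of the compact group $\grp$ through the action to get orbit laws $q(\argdot,m)$, use uniqueness of the $\grp$-invariant probability measure on each orbit to identify the conditional law given the maximal invariant $M$ with the orbit law, and obtain the second equivalence from the fact that a $\grp$-invariant conditional is constant on orbits and hence factors measurably through $M$. The technical obstacles you flag at the end (joint measurability of the kernel via a measurable cross-section, and assembling the fiberwise invariance statements into a single almost-sure kernel identity, e.g.\ by Haar-averaging a regular conditional distribution) are precisely the content of that reference's proof, so your plan is sound and complete in the same sense.
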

This lemma serves as a refined version of Lemma 20 presented in \citep{DBLP:journals/jmlr/Bloem-ReddyT20}. Proving this lemma involves a comprehensive set of definitions and notations, which are beyond the scope of this paper. We encourage interested readers to refer to \citep{DBLP:journals/jmlr/Bloem-ReddyT20} for detailed proof.

By leveraging the insights and techniques established above, we are able to establish Theorem~\ref{thm:inv}. The theorem provides a formal characterization of the relationship between the invariance of the conditional distribution $P(Y|S,V)$ under the group $\grp$ and the existence of an invariant sufficient representation $M(S,V)$.

\maintheorem*

\begin{proof}
    Lemma~\ref{lem:maximal:invariant} plays a crucial role in establishing the conditional independence relationship between $Y$ and $(S,V)$ based on the invariant sufficient representation $M(S,V)$. This lemma demonstrates that when $M(S,V)$ is employed, the variables $Y$ and $(S,V)$ become conditionally independent, meaning that knowledge of $M(S,V)$ is sufficient to explain the relationship between $Y$ and $(S,V)$.

    By leveraging the insights provided by Lemma~\ref{lem:maximal:invariant}, we can derive the conclusion of Theorem~\ref{thm:inv} with the support of lemma~\ref{lem:noise:out:suff}. Lemma~\ref{lem:noise:out:suff} further strengthens the link between the conditional independence relationship and the existence of an invariant sufficient representation. It establishes the notion that the invariance of the conditional distribution $P(Y|S,V)$ under the group $\grp$ is directly related to the presence of an invariant sufficient representation $M(S,V)$.
\end{proof}

Given the established conditional independence relationship $Y\condind_{M(S,V)} (S,V)$ as demonstrated in Lemma~\ref{lem:maximal:invariant}, we can now proceed to prove the following corollary by examining the definition of adequacy:

\ourcorollary*

This corollary follows directly from the nature of the conditional independence relationship and the definition of adequacy. The fact that $Y$ and $(S,V)$ are conditionally independent given $M(S,V)$ indicates that the representation $M(S,V)$ contains all the necessary information to explain the relationship between $Y$ and $(S,V)$. In other words, the representation $M(S,V)$ adequately captures the relevant features and factors that influence the conditional distribution of $Y$ given $(S,V)$.

\subsection{Some Remarks}
Throughout the course of our proof, it is possible that some points may cause confusion. To address this, we present a set of remarks and additional propositions that aim to provide further clarity and insights. Specifically, we address the question of why Eq~\ref{eq:theory} can lead to an invariant conditional distribution, considering its nature as a joint distribution scheme.

In the probabilistic literature, it is often more convenient to establish the invariance of joint distributions as a starting point. In our case, we focus on the joint distribution $P(S,V,Y)$, which can be considered invariant under certain conditions. This joint distribution is invariant if and only if:
\begin{align}\nonumber
 (S,V,Y) \equdist (g\cdot (S,V), Y) \quad \text{for all } g\in\grp \;.
\end{align}
These conditions ensure that the joint distribution $P(S,V,Y)$ possesses the desired invariance properties required for the subsequent analysis. By establishing an invariant joint distribution, we pave the way for investigating the properties of the conditional distribution $P(Y|S,V)$.

By leveraging the invariance of the joint distribution, we can derive the invariance of the conditional distribution $P(Y|S,V)$. This arises from the fact that the joint distribution scheme inherently captures the relationship between $Y$ and $(S,V)$, allowing us to analyze their conditional distribution in an invariant manner \citep{kallenberg2017random,DBLP:journals/jmlr/Bloem-ReddyT20}:
\begin{proposition}\label{lemma:joint_symmetry}
Assume a group $\grp$ acting on $(\gS,\gV),$ and then $ P(Y|S,V)$ is $\grp$-invariant if and only if $(S,V,Y) \equdist (g\cdot (S,V),Y)$ for all $g\in\grp$.
\end{proposition}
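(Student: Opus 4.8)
The plan is to reduce the statement to the (essential) uniqueness of regular conditional distributions. I would write the joint law as a disintegration $P(S,V,Y)=P(S,V)\otimes\kappa$, where $\kappa:(\gS\times\gV)\times\borel_{\gY}\to\bbR_+$ is the regular conditional distribution $\kappa((s,v),\argdot)=P(Y\in\argdot\mid S=s,V=v)$; such a $\kappa$ exists and is a.e.-unique precisely because $\gY$ is Borel. Under this disintegration, the statement ``$P(Y\mid S,V)$ is $\grp$-invariant'' in the sense of Property~\ref{property_inv} is exactly the assertion that $\kappa(g\cdot(s,v),\argdot)=\kappa((s,v),\argdot)$ for every $g\in\grp$ and $P(S,V)$-a.e.\ $(s,v)$. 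I would note first that taking the trivial $\gY$-marginal (i.e.\ $B=\gY$) in $(S,V,Y)\equdist(g\cdot(S,V),Y)$ already forces $P(S,V)$ to be $\grp$-invariant, so marginal invariance is built into the right-hand side and I may use it freely.

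For the direction joint $\Rightarrow$ conditional, I would compute the disintegration of the law of $(g\cdot(S,V),Y)$. Since $Y$ is untouched and the event $g\cdot(S,V)=(s,v)$ is the event $(S,V)=g^{-1}\cdot(s,v)$, the conditional law of $Y$ given $g\cdot(S,V)=(s,v)$ equals $\kappa(g^{-1}\cdot(s,v),\argdot)$, taken against the marginal $g_{\ast}P(S,V)$. The hypothesis says this joint law coincides with $P(S,V)\otimes\kappa$; because their $(S,V)$-marginals agree (both equal $P(S,V)$ by the remark above), essential uniqueness of the disintegration yields $\kappa(g^{-1}\cdot(s,v),\argdot)=\kappa((s,v),\argdot)$ for $P(S,V)$-a.e.\ $(s,v)$. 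Running this over all $g\in\grp$ (and relabelling $g^{-1}\mapsto g$) gives exactly Property~\ref{property_inv}.

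For the converse, conditional $\Rightarrow$ joint, I would recompose the joint law by a change of variables. For measurable $A\subseteq\gS\times\gV$ and $B\subseteq\gY$, write $P(g\cdot(S,V)\in A,\,Y\in B)=\int \mathbf{1}[g\cdot(s,v)\in A]\,\kappa((s,v),B)\,P(S,V)(d(s,v))$, substitute the conditional invariance $\kappa((s,v),B)=\kappa(g\cdot(s,v),B)$, and then change variables $(s,v)\mapsto g\cdot(s,v)$ using $\grp$-invariance of the marginal $P(S,V)$ to obtain $\int\mathbf{1}[(s,v)\in A]\,\kappa((s,v),B)\,P(S,V)(d(s,v))=P((S,V)\in A,\,Y\in B)$. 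Since measurable rectangles $A\times B$ generate the product $\sigma$-algebra, this establishes $(S,V,Y)\equdist(g\cdot(S,V),Y)$.

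The main obstacle is not the algebra but the measure-theoretic bookkeeping around the disintegration: I must invoke existence and a.e.-uniqueness of regular conditional distributions (which is why the statement restricts to Borel / standard Borel spaces), verify measurability of the map $(g,(s,v))\mapsto g\cdot(s,v)$ so that the pushforwards and change of variables are legitimate, and keep the ``$P(S,V)$-a.e.'' qualifiers consistent as $g$ ranges over $\grp$ (noting that Property~\ref{property_inv} is a per-$g$ statement, so no quantifier swap is needed). The one genuinely substantive point to be explicit about is that the forward implication truly requires $\grp$-invariance of the marginal $P(S,V)$; this is automatic in the reverse direction and holds in our setting, so I would carry it as a standing hypothesis to make the equivalence clean, exactly in parallel with Lemma~\ref{lem:maximal:invariant}.
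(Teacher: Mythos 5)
Your proof is correct, but there is no in-paper argument to compare it against: the paper states this proposition as a remark and defers its justification entirely to citations (Kallenberg, 2017; Bloem-Reddy and Teh, 2020), so your disintegration argument is a self-contained reconstruction of the cited result rather than a variant of the paper's own proof. Your route---writing $P(S,V,Y)=P(S,V)\otimes\kappa$ with $\kappa$ a regular conditional distribution, invoking a.e.-uniqueness of disintegrations for the direction joint $\Rightarrow$ conditional, and a change of variables under the marginal for the converse---is precisely the standard proof underlying the reference, and all the measure-theoretic caveats you list (Borel spaces for existence of $\kappa$, measurability of the action, per-$g$ null sets) are handled appropriately. The most valuable part of your write-up is the point you flag at the end: as literally stated, the ``only if'' direction (conditional invariance $\Rightarrow$ joint equality in distribution) fails without $\grp$-invariance of the marginal $P(S,V)$, since $(S,V,Y)\equdist(g\cdot(S,V),Y)$ already forces $g_{*}P(S,V)=P(S,V)$, whereas conditional invariance alone constrains the kernel and says nothing about the marginal. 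The corresponding lemma in Bloem-Reddy and Teh carries exactly this marginal-invariance hypothesis, and the paper's statement silently drops it; your decision to carry it as a standing hypothesis (noting it is automatic in the reverse direction and consistent with the invariant setting of the paper) is the correct repair, and it parallels the phrasing of the paper's own Lemma~\ref{lem:maximal:invariant}, whose second claim is likewise conditioned on $P(S,V)$ being $\grp$-invariant.
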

This proposition establishes a direct correspondence between the invariance of the conditional distribution $P(Y|S,V)$ and the symmetry of the joint distribution $(S,V,Y)$ under the group action. Specifically, it states that the conditional distribution remains invariant if and only if the joint distribution exhibits the same structural patterns and properties when transformed by any element of the group $\grp$.

To address another potential source of confusion, we delve into the distinction between $f(\noise, S,V)$ and $f(S,V)$, where the former represents a stochastic function and the latter a deterministic functional model. It is worth noting that deterministic functional models can be viewed as a special case of stochastic functions. In the context of an invariant stochastic function $Y = f(\noise, S,V)$, we can establish the following relationship:
\begin{align}
\label{eq:connection}
  \bbE[ Y \mid S,V] = \int_{[0,1]} f(\noise,S,V)\; d\noise = h(S,V) \;,
\end{align} 
Here, $\bbE[ Y \mid S,V]$ denotes the conditional expectation of $Y$ given $S$ and $V$. By integrating the stochastic function $f(\noise,S,V)$ with respect to $\noise$ over the range $[0,1]$, we arrive at the invariant deterministic function $h(S,V)$. Equation~\ref{eq:connection} establishes a crucial connection between the invariant stochastic function $f(\noise,S,V)$ and the corresponding invariant deterministic function $h(S,V)$. This relationship highlights the interplay between stochasticity and determinism in modeling invariant behavior.

\section{Proof of Proposition 3.7}

\maxinvt*
It is important to note that the definition of the invariant sufficient representation is formulated with respect to the variables $(S,V)$. To facilitate comprehension, let us revisit the definition of the invariant sufficient representation. Furthermore, to enhance clarity and ease of understanding, we will also provide the corresponding invariant sufficient representation for the marginal distribution $X$, which can represent either $S$ or $V$.

\begin{definition} For a group $\grp$ of actions on any $x\in \gX$ , we say $M : \gX \to \gM$ is a invariant sufficient representation for some space $\gX$, if it satisfies: If $M(x_1) = M(x_2)$, then $x_2 = g\cdot x_1$ for some $g\in\grp$; otherwise, there is no such $g$ that satisfies $x_2 = g\cdot x_1$. 
\end{definition}

\divideprop*

\begin{proof}
For any $(s,v)$ 
 Consider the function $f(M_s(s), M_v(v))$, where $M_s$ and $M_v$ represent the mappings from $S$ and $V$ to their corresponding invariant sufficient representations. We aim to show that $f(M_s(S), M_v(V))$ serves as the invariant sufficient representation of $(S,V)$ under the group $\grp$.

First, let's examine the three possible cases. If $s_1 \neq s_2$ while $v_1 = v_2$, it is evident that if $f(M_s(s_1), M_v(v_2)) = f(M_s(s_2), M_v(v_2))$, there must exist $g = (h, e) \in \grp$ such that $(s_2,v_2) = g \cdot (s_1,v_1)$, where $e$ denotes the identity transformation. Similarly, if $s_1 = s_2$ but $v_1 \neq v_2$, the same argument holds.

Now, let's consider the case where $s_1 \neq s_2$ and $v_1 \neq v_2$. Since $f$ is an injective function, if $f(M_s(s_1), M_v(v_2)) = f(M_s(s_2), M_v(v_2))$, it implies that $M_s(s_1) = M_s(s_2)$ and $M_v(v_1) = M_v(v_2)$. By the definition of the invariant sufficient representation, we can conclude that if $f(M_s(s_1), M_v(v_1)) = f(M_s(s_2), M_v(v_2))$, then $(s_2,v_2) = g \cdot (s_1,v_1)$ for some $g \in \grp$. Conversely, if no such $g$ exists to satisfy $(s_2,v_2) = g \cdot (s_1,v_1)$, it implies that $f(M_s(s_1), M_v(v_1)) \neq f(M_s(s_2), M_v(v_2))$.

Therefore, we can conclude that $f(M_s(S), M_v(V))$ serves as the invariant sufficient representation of $(S,V)$ under the group $\grp$. This function captures the essential information required to explain the relationship between $(S,V)$ and $Y$, ensuring that the conditional distribution $P(Y|S,V)$ remains invariant under the group action.
\end{proof}

\newpage

\begin{table}[t]
\centering
\small
\caption{The statistics of Amazon product dataset, which is from \citep{ou2022learning}}
\label{tab:statistic_amazon_product}
\vspace{5pt}
\begin{tabular}{@{}cccccccc@{}}\toprule
Categories & $|\mathcal{D}|$ & $|V|$ & $\sum |S^*|$ & $\mathbb{E}[|S^*|]$ & $\min_{S^*} |S^*|$ & $\max_{S^*} |S^*|$\\
\midrule
Toys & 2,421  & 30 & 9,924 & 4.09 & 3 & 14 \\
Furniture & 280  & 30 & 892 & 3.18 & 3 & 6 \\
Gear & 4,277  & 30 & 16,288 & 3.80 & 3 & 10 \\
Carseats & 483  & 30 & 1,576 & 3.26 & 3 & 6 \\
Bath & 3,195  & 30 & 12,147 & 3.80 & 3 & 11 \\
Health & 2,995  & 30 & 11,053 & 3.69 & 3 & 9 \\
Diaper & 6,108  & 30 & 25,333 & 4.14 & 3 & 15 \\
Bedding & 4,524  & 30 & 17,509 & 3.87 & 3 & 12 \\
Safety & 267  & 30 & 846 & 3.16 & 3 & 5 \\
Feeding & 8,202  & 30 & 37,901 & 4.62 & 3 & 23 \\
Apparel & 4,675  & 30 & 21,176 & 4.52 & 3 & 21 \\
Media & 1,485  & 30 & 6,723 & 4.52 & 3 & 19 \\
\bottomrule
\end{tabular}
\end{table}

\section{Details of Neural Subset Selection in OS oracle}
\label{sec:app:pgm}

\subsection{The Objective of Neural Subset Selection in Optimal Subset Oracle}
\label{sec:app:objective}
Our formulation of the optimization objective is based on the framework established in \citep{ou2022learning}. Specifically, the optimization objective is to address Equation~\ref{prob_set_func_learning} by adopting an implicit learning strategy grounded in probabilistic reasoning. This approach can be succinctly formulated as follows:

\begin{align*}
    &\argmax_\theta\  \mathbb{E}_{\mathbb{P}(V, S)} [\log p_\theta (S^* | V)] \\\notag   
    &\operatorname{s.t.} \  p_\theta (S | V) \propto  F_\theta (S ; V), \forall  S \in 2^V, 
\end{align*}

The important step in addressing this problem involves constructing an appropriate set mass function $p_\theta (S|V)$ that is monotonically increasing in relation to the utility function $F_\theta (S;V)$. To achieve this, we can employ the Energy-Based Model (EBM):
\begin{align*}
    p_\theta (S|V) = \frac{\mathrm{exp}( F_\theta (S; V))}{Z}, \; Z := \sum\nolimits_{S'\subseteq V}  \mathrm{exp}( F_\theta (S'; V)),
\end{align*}
In practice, we approximate the EBM by solving a variational approximation
$$
    \phi^* =  argmin_{\phi} D(q(S|V;Y) || p_\theta (S|V)),
$$
The expression of $q(S|V;Y)$ is defined as follows:
$$
q(S|V;Y) = \prod_{i \in S}Y_i \prod_{i \not\in S}(1-Y_i), Y\in[0,1]^{|V|}.
$$
Next, we would like to explain why $q(S|V;Y)$ can approximate $P(S|V).$ We have defined that $Y \in [0, 1]^{|V|}$ and $S = \{0, 1\}^{|V|}$. In this case, $Y$ can be viewed as a stochastic version of $S$ since $Y$ can also be generated as $Y = \{0, 1\}^{|V|}$ while still satisfies the constraint $Y \in [0, 1]^{|V|}$.

To facilitate comprehension, let us consider an illustrative scenario. Suppose we have a ground set $V = \{x_1, x_2, x_3\}$, and the optimal subset $S^*$ is $\{x_1, x_2\}$, which can be represented as $[1, 1, 0]$. Specifically, we define $P(S^*|V) = 1$, indicating that $S^*$ is the correct subset, while for any $S \neq S^*$, we have $P(S|V) = 0$.

Now, let's examine the case when $Y = [1, 1, 0]$. In this situation, we can calculate that $q(Y|S^*,V) = 1$. This implies that $q(Y|S^*,V)$ accurately represents the probability of observing $S^*$ given $V$, and it correctly assigns a high probability to the optimal subset.

Moreover, to generate $Y,$ we construct an EquiNet \citep{ou2022learning}, denoted as $Y = \operatorname{EquiNet}(V;\phi): 2^V \rightarrow [0,1]^{|V|},$ which induces the distribution of $P(Y|S,V)$ in our main text. This network takes the ground set $V$ as input and outputs probabilities indicating the likelihood of each element $x \in V$ being part of the optimal subset $S^*$. In the inference stage, EquiNet is employed to predict the optimal subset for a given ground set $V$, using a TopN rounding approach. Our approach aims to develop a more customized neural network architecture specifically designed for this process.

\section{Experimental Details}
\label{sec:app:exp}

\subsection{Detailed Description of Datasets}
\textbf{Amazon Baby Registry Dataset.}
The Amazon baby registry data \citep{gillenwater2014expectation} is collected from Amazon with several datasets in different categories, such as toys, furniture, etc. For each category, we are provided with $|V|$ sets of products selected by different customers. We then construct a sample $(S^*, V)$ as follows. We first removed any subset whose optimal subset size $|S^*|$ is greater than or equal to $30$. Then we divided the remaining subsets into the training, validation, and test folds with a $1:1:1$ ratio. Finally, we randomly sampled additional $30-|S^*|$ products from the same category to construct $(S^{\star},V).$ In this way, we constructed a data point $(S^*,V).$ For completeness, We provide the statistics of the categories in Table.~\ref{tab:statistic_amazon_product} from \citep{ou2022learning}.

\paragraph{Double MNIST.} This dataset includes $1000$ photos, ranging from $00$ to $99$. To construct $(S^{*},V),$ we first sampled $|S^{*}| \in \{2,\dots,5\}$ images with the same digit as $S^{*}.$ Then we selected $20 - |S^{*}|$ images with different digits to construct the set $V \backslash S^{*}.$

\paragraph{CelebA.} The CelebA dataset contains $202,599$ images and $40$ attributes. We randomly chose two attributes to construct each set $V$ with the size of $8$. Then, for each set, we selected $|S^{\star}| \in \{2,3\}$ images without the two attributes as the $S^{\star}.$ To facilitate comprehension, we have included an illustrative example of the dataset in Fig.~\ref{fig:celeba}, sourced from the work of \citep{ou2022learning}.

\paragraph{PDBBind.} This dataset provides a comprehensive collection of experimentally measured binding affinity data for biomolecular complexes. We used the ``refined'' part of the whole PDBBind to construct our dataset of subsets, which contain $179$ complexes. To construct a data point $(V, S^{\star})$, we randomly sampled $30$ complexes as the ground set $V$, and then $S^{\star}*$ was generated by the five most active complexes in $V$. We constructed $1000$, $100$, and $100$ data points for the training, validation, and test split, respectively.

\paragraph{BindingDB.} BindingDB is a public, web-accessible database of measured binding affinities consisting of $52,273$ drug-targets with small, drug-like molecules. Same as PDBBind, We randomly sampled $300$ drug-targets from the BindingDB database to construct the ground set $V$ and select $15$ most active drug-target pairs as $S^{\star}.$ Finally, we also generated the training, validation, and test set with the size of $1000$, $100$, and $100$, respectively.

\begin{figure}[t]
\centering
\includegraphics[width=4.0in]{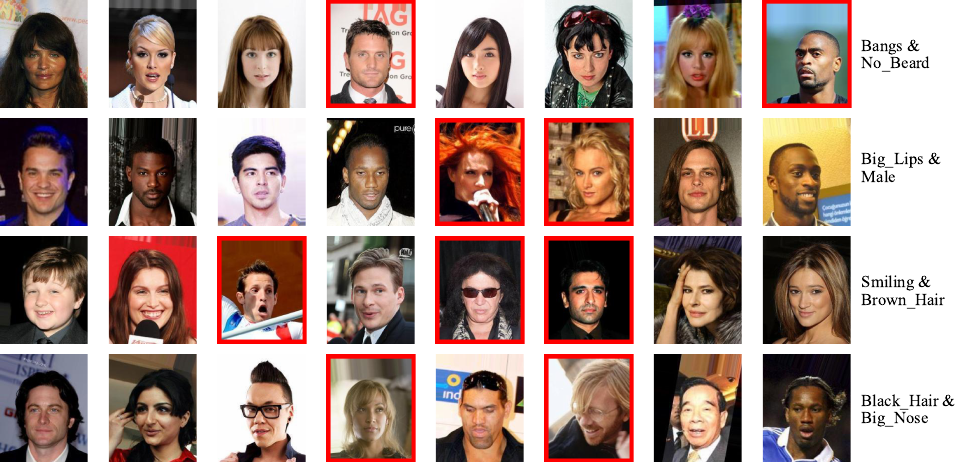}
\caption{Here is an illustration of the CelebA dataset from the work by \citep{ou2022learning}. Each row in the dataset represents a sample, containing a combination of $|S^*|$ anomaly images (highlighted in red boxes) and $8-|S^*|$ normal images. Notably, within each sample, normal images possess two specific attributes, which are indicated in the rightmost column. In contrast, anomalies lack both of these attributes. This clear distinction between normal images and anomalies allows for a comprehensive analysis and understanding of the dataset's characteristics.}
\label{fig:celeba}
\end{figure}

\subsection{The Architecture of \ours}
The structure of \ours is similar to EquiVSet, while \ours has an additional information-sharing component, i.e., EquiVSet uses only one DeepSets layer in the set function modelue, while \ours uses two.
% (See Fig.~\ref{fig:structure}).
For completeness, we also provide the detailed architectures of \ours in this subsection. Firstly, the structure of DeepSets in \ours is as follows:

\newcommand{\relu}{\mathrm{ReLU}}
\newcommand{\tanhact}{\mathrm{tanh}}
\newcommand{\fc}{\mathrm{FC}}
\newcommand{\conv}{\mathrm{Conv}}
\newcommand{\sab}{\mathrm{SAB}}
\newcommand{\initlayer}{\mathrm{InitLayer}}
\newcommand{\reducesum}{\mathrm{SumPooling}}
\begin{table}[ht]
\centering
\caption{Detailed architectures of the DeepSets.}
\vspace{5pt}
\begin{tabular}{@{}c@{}}\toprule
\multicolumn{1}{c}{\textbf{Set Function}} \\
\cmidrule(lr){1-1}
$\initlayer(S,h)$  \\
$\reducesum$  \\
$\fc(h, h_d, \relu)$  \\
$\fc(h_d, h_d, \relu)$  \\
$\fc(h, 1, -)$  \\
\bottomrule
\end{tabular}
\label{tab:equivset_architecture}
\end{table}
Specifically, $\initlayer(S,d)$ encodes the set objects into vector representations. $\fc(d, h, f)$ is a fully-connected layer with activation function $f$. In particular, we set $h$ as $256$ and  $h_{d}$ as $500$, same as \citep{ou2022learning}.

In all experiments, the structure of InitLayer will change based on the type of datasets.

\textbf{Synthetic datasets.}
The synthetic datasets consist of the Tow-Moons and Gaussian-Mixture datasets. Each instance of the set is a two-dimensional vector, which represents the corresponding Cartesian coordinates. In this dataset, the $\initlayer$ is a one-layer feed-forward neural network $\fc(2, 256, -)$.

\textbf{Amazon Baby Registry.}
In this datasets, each product is encoded into a $768$-dimensional vector by the
pre-trained BERT model based on its textual description. Therefore, each element of the set is a $768$-dimensional feature vector, and $\fc(768, 256, -)$ will be the $\initlayer$ to process each embedding of the product.

\textbf{Double MNIST.} The double MNIST dataset consists of different digit images with a shape of $(64,64)$ and we transformed it as $(4096,)$. Then, the $\initlayer$ is also a fully connected layer as $\fc(4096, 256, -)$.

\textbf{CelebA.} The CelebA dataset includes face images in the shape of $(3,64,64)$. We used $3$-depth convolutional neural networks as the $\initlayer$. Specifically,
\begin{align}
    \operatorname{ModuleList}([\conv(32,3,2,\relu), \conv(64,4,2,\relu), \nonumber \\ \conv(128,5,2,\relu), \operatorname{MaxPooling}, \fc(128, 256, -)]), \nonumber
\end{align}
where $\conv(d,k,s,f)$ is a convolutional layer with $d$ output channels, $k$ kernel size, $s$ stride size, and activation function $f$.

\textbf{PDBBind.}
The PDBBind database consists of experimentally measured binding affinities for biomolecular complexes \citep{liu2015pdb}. The
atomic convolutional network (ACNN) \citep{gomes2017atomic} provides meaningful feature vectors for complexes by constructing nearest neighbor graphs based on the 3D coordinates of atoms and predicting binding free energies. In this work, we used ACNN as the pre-train model and used the output of the second to the lastlayer of the ACNN model to obtain the representations of complexes. Specifically, the $\initlayer$ is defined as
\begin{align}
    \operatorname{ModuleList}([\mathrm{ACNN}[:-1], \fc(1922, 2048, \relu), \fc(2048, 256, -)]), \nonumber
\end{align}
where $\mathrm{ACNN}[:-1]$ denotes the ACNN module without the last prediction layer, whose output dimensionality is $1922$.

\textbf{BindingDB.}
We employ the DeepDTA model \citep{ozturk2018deepdta} as the based-encoder to transform drug-target pairs as vector representations. The detailed architecture of $\initlayer$ used in our code is defined as follows:
\begin{table}[ht]
\small
\centering
\caption{Detailed architectures of InitLayer in the BindingDB dataset.}
\vspace{5pt}
\begin{tabular}{@{}cc@{}}\toprule
\textbf{Drug} & \textbf{Target} \\
\cmidrule(lr){1-1} \cmidrule(lr){2-2}
$\conv(32,4,1,\relu)$ & $\conv(32,4,1,\relu)$ \\
$\conv(64,6,1,\relu)$ & $\conv(64,8,1,\relu)$ \\
$\conv(96,8,1,\relu)$ & $\conv(96,12,1,\relu)$ \\
$\mathrm{MaxPooling}$ & $\mathrm{MaxPooling}$ \\
$\fc(96, 256, \relu)$ & $\fc(96, 256, \relu)$ \\
\multicolumn{2}{c}{$\mathrm{Concat}$} \\
\multicolumn{2}{c}{$\fc(512, 256, -)$} \\
\bottomrule
\end{tabular}
\label{tab:bindingdb_initlayer_architecture}
\end{table}

\subsection{Training Details}
We applied the early stopping strategy to train both the baselines and our models as in EquiVSet. Specifically, if the best validation performance is not improved in the continuous $6$ epochs, we will stop the training process. The maximum of epochs is set as $100$ for each dataset.  We saved the models with the best validation performance and evaluated them on the test set. We repeated all experiments $5$ times with different random seeds, and the average performance metrics and their standard deviations are reported as the final performances.

% \begin{figure}[ht]
% \vspace{-0.20in}
% \begin{center}
% \includegraphics[width=0.45\textwidth]{}
%         \caption{The architecture of \ours is comprised of two DeepSets models, each responsible for processing $S$ and $V$ independently. These separate DeepSets modules extract relevant features from $S$ and $V$ individually, capturing their respective characteristics. Subsequently, the outputs from both modules are aggregated together, combining the extracted information to form a comprehensive representation of $(S,V)$}
%      \label{fig:structure}
%      \end{center}
% \vspace{-0.2in}
% \end{figure}

The proposed models are trained using the Adam optimizer \citep{kingma2014adam} with a fixed learning rate of $1e-4$ and a weight decay rate of $1e-5$. To accommodate the varying model sizes across different datasets, we select the batch size from the set $\{4, 8, 16, 32, 64, 128\}$. Notably, we choose the largest batch size that allows the model to be trained on a single GeForce RTX 2080 Ti GPU, ensuring efficient training.

\section{Additional Experiments}
\subsection{Synthetic Experiments}
We substantiate the effectiveness of our models by conducting experiments on learning set functions using two synthetic datasets: the two-moons dataset with additional noise of variance $\sigma^2 = 0.1$, and a mixture of Gaussians represented by $\frac{1}{2} \mathcal{N}( \mu_0, \Sigma) + \frac{1}{2} \mathcal{N}(\mu_1, \Sigma)$.

For the Gaussian mixture dataset, we specify the following data generation procedure:
i) We first select an index, denoted as $b$, using a Bernoulli distribution with a probability of $\frac{1}{2}$.
ii) Next, we sample 10 points from the Gaussian distribution $\mathcal{N}(\mu_{b}, \Sigma)$ to construct the set $S^*$.
iii) Subsequently, we sample 90 points for $V \backslash S^*$ from the Gaussian distribution $\mathcal{N}(\mu_{1-b}, \Sigma)$.
We repeat this process to obtain a total of 1,000 samples, which are then divided into training, validation, and test sets.

Both the two-moon dataset and the Gaussian mixture dataset serve as valuable benchmarks for evaluating the performance of our models. By conducting experiments on these datasets and collecting the necessary data points, we are able to demonstrate the efficacy of our approach in learning complex set functions. The results are reported in Table~\ref{tab:synthetic}.

\begin{table}[t]
    \centering
        % \vspace{-4mm}
        \caption{Results in the MJC metric on Two-Moons and Gaussian-Mixture datasets. we leverage the results of baselines as reported in the study by \citep{ou2022learning}.}
            \label{tab:synthetic}
        \vspace{4mm}
        \begin{tabular}{@{}ccccc@{}}\toprule
        Method & Two Moons & Gaussian Mixture  \\
        \midrule
        Random & 0.055 & 0.055 \\
        PGM & 0.360 $\pm$ 0.020 & 0.438 $\pm$ 0.009 \\
        DeepSet & 0.472 $\pm$ 0.003 & 0.446 $\pm$ 0.002 \\
        Set Transformer & 0.574 $\pm$ 0.002 & 0.905 $\pm$ 0.002 \\
        EquiVSet  & 0.584 $\pm$ 0.003 & 0.908 $\pm$ 0.002 \\
        \ours  & \textbf{0.590} $\pm$ \textbf{0.003} & \textbf{0.909} $\pm$ \textbf{0.002} \\
        \bottomrule
        \end{tabular}
\end{table}

\subsection{Computation Cost}
\label{sec:app:cost}
One of the key distinctions between \ours and EquiVSet lies in the inclusion of an information-sharing module, specifically a DeepSets Layer, in our architecture. However, a legitimate concern that arises is whether the improved performance of \ours can be solely attributed to the additional parameters introduced by this module, rather than the underlying framework itself. To address this concern and gain deeper insights, we conducted experiments using the CelebA dataset.

In order to enhance the capacity of EquiVSet and enable a fair comparison, we introduced an additional convolution layer within the $\initlayer$. By doing so, we ensured that both EquiVSet and \ours had comparable model sizes. The experimental results, including the performance of models with different model sizes, are reported in Table \ref{tab:abaltion_app}.

Moreover, we further investigated and analyzed the specific architecture of the initial layer for EquiVSet in two different versions, denoted as EquiVSet(v1) and EquiVSet(v2). For EquiVSet(v1), the initial layer is structured as follows:
\begin{align}
    \operatorname{ModuleList}([\conv(32,3,2,\relu), \conv(64,4,2,\relu),  \conv(64,4,2,\relu), \nonumber \\ \conv(128,5,2,\relu), \operatorname{MaxPooling}, \fc(128, 256, -)]), \nonumber
\end{align}
The initial layer for EquiVSet(v2) is:
\begin{align}
    \operatorname{ModuleList}([\conv(32,3,2,\relu), \conv(64,4,2,\relu),  \conv(128,5,2,\relu), \nonumber \\ \conv(128,5,2,\relu), \operatorname{MaxPooling}, \fc(128, 256, -)]), \nonumber
\end{align}

\begin{figure}[ht]
  \centering

  \subfigure[Toys]{\includegraphics[width=0.24\textwidth]{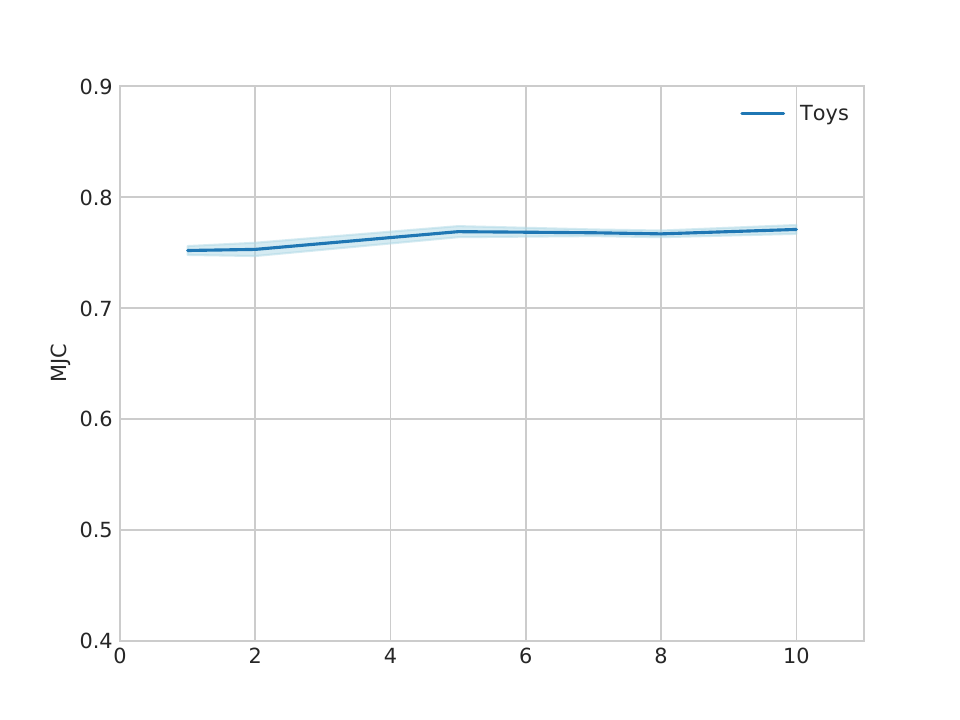}}
  \subfigure[Gear]{\includegraphics[width=0.24\textwidth]{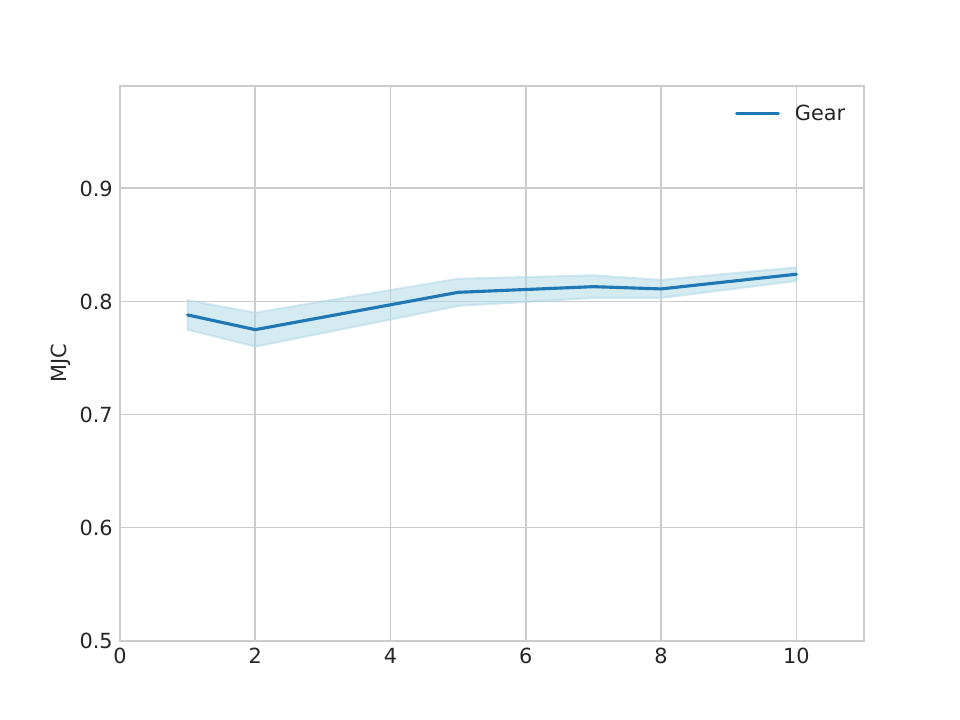}}
  \subfigure[Bath]{\includegraphics[width=0.24\textwidth]{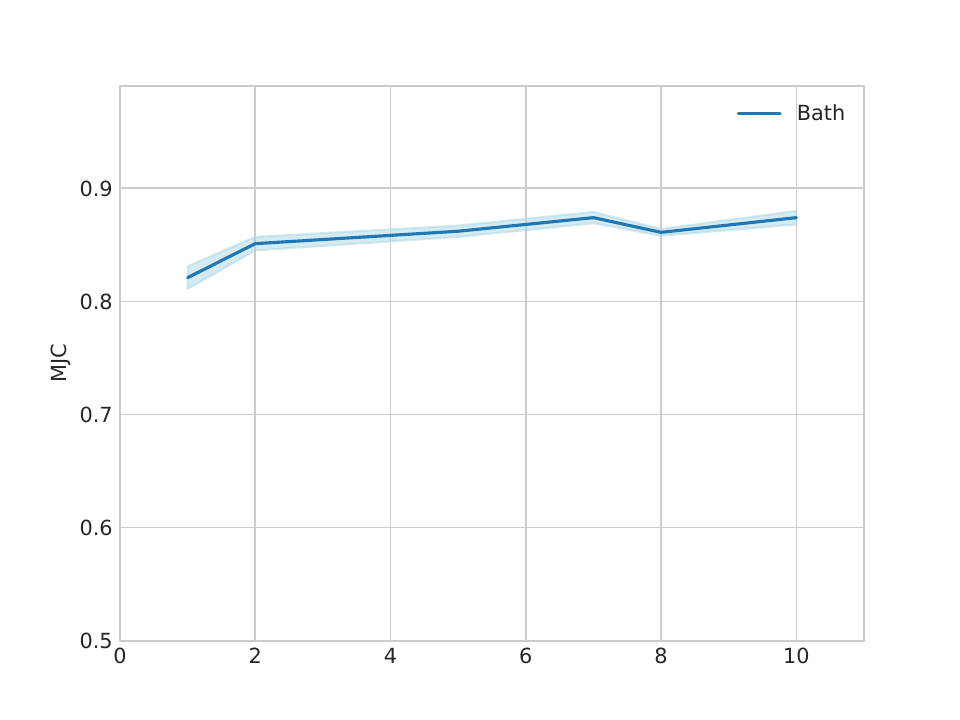}}
  \subfigure[Health]{\includegraphics[width=0.24\textwidth]{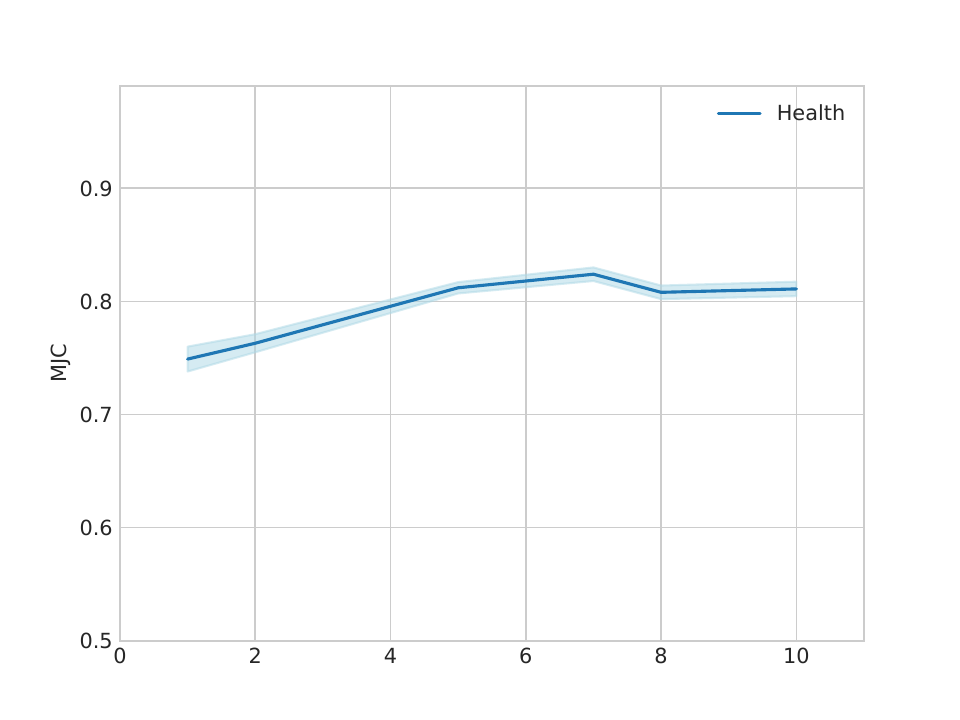}}

  \caption{Sensitivity analysis of \ours performance under varying numbers of Monte Carlo (MC) sampling.}
  \label{fig:mc_number}
\end{figure}

\subsection{Compound Selection}
In the main text, we focused on the application of a single filter. However, to provide a more practical perspective, we extended our analysis by simulating the OS (Objective Selector) oracle of compound selection using \emph{two filters}: the high bioactivity filter and the diversity filter. By incorporating these additional filters, we aimed to evaluate the performance of our approach in a more realistic scenario.

The results of this extended analysis are presented in Table \ref{tab:affinity_predict}. These findings shed light on the effectiveness and applicability of our approach when considering multiple filters for compound selection. By incorporating both high bioactivity and diversity filters, we demonstrate the potential of our method to enhance the selection process and improve the overall quality and diversity of the selected compounds.

\begin{table}[ht]
\centering
\caption{Compound selection results.}
\label{tab:affinity_predict}
\begin{tabular}{@{}ccccc@{}}\toprule
Method & PDBBind & BindingDB  \\
\midrule
Random & 0.073 & 0.027 \\
PGM & 0.350 $\pm$ 0.009 & 0.176 $\pm$ 0.006\\
DeepSet & 0.323 $\pm$ 0.004 & 0.165 $\pm$ 0.005\\
Set Transformer & 0.355 $\pm$ 0.010 & 0.183 $\pm$ 0.004\\
EquiVSet & 0.357 $\pm$ {0.005} & 0.188 $\pm$ 0.006\\
\ours & \textbf{0.371} $\pm$ \textbf{0.010} & \textbf{0.198} $\pm$ \textbf{0.005}\\
\bottomrule
\end{tabular}
\end{table}
       
\begin{table}[t]
    \centering
    \caption{
        Compared with EquiVSet with more parameters}
    \label{tab:abaltion_app}
    %\resizebox{0.3\linewidth}{!}{%
    \vspace{2mm}
    \begin{tabular}{l | ll}
        \toprule
        &  
        MJC &
       Parameters
        \\
        \midrule       

         \multirow{1}{*}{\bf Random}  
         & 
         0.2187 &
         \quad -
         \\

        \multirow{1}{*}{\bf DeepSet}  
         & 
         0.440$\pm$0.006 &
         651181 
         \\

         \multirow{1}{*}{\bf Set-Transformer}  
         & 
         0.527$\pm$0.008 &
         1288686 
         \\

         \multirow{1}{*}{\bf EquiVSet}  & 
         0.549$\pm$0.005 &
         1782680
         \\

        \multirow{1}{*}{\bf EquiVSet (v1)}  & 
          0.554$\pm$0.007 &
         2045080
         \\

         \multirow{1}{*}{\bf EquiVSet (v2)}  & 
          0.560$\pm$0.005 &
          \textbf{3421592}
         \\
    \midrule
         \multirow{1}{*}{\bf INSET} &  
         \textbf{0.580}$\pm$\textbf{0.012} & 
         2162181
        \\
          
        \bottomrule

    \end{tabular}
    %}
\end{table}

\subsection{Ablation Studies}
To further verify the robustness of INSET, we have now conducted ablation studies focusing on the Monte-Carlo (MC) sample numbers for each input pair $\{(V_i, S_i^*)\}$. In the context of neural subset selection tasks, our primary aim is to train the model  $\theta$ to predict the optimal subset $S^*$  from a given ground set $V$.  During training, we sample $m$ subsets from $V$ to optimize our model parameters $\theta$, thereby maximizing the conditional probability distribution $p_\theta (S^* | V)$ among of all pairs of $(S,V)$ for for a given V. In our main experiments, we adhere to EquiVSet's protocol by setting the sample number $m$ to 5 across all the tasks. The empirical results depicted in Figure~\ref{fig:mc_number} demonstrate that \ours consistently achieves satisfactory results, even with decreasing values of $m$.

\subsection{Experiments on Set Anomaly Detection.}
In this experiment, we further perform set anomaly detection on CIFAR-10. Following the setup of \citep{ou2022learning}, we randomly sample $n \in \{2,3\}$ images as the OS oracle $S^*$, and then select $8 - |S^*|$ images with different labels to construct the set $V\backslash S^*$. We finally obtain the training, validation, and test set with the size of $10,000, 1,000, 1,000$, respectively. We report all the set anomaly detection results in Table~\ref{tab:anomaly}. It is obviously that \ours outperform the baselines significantly across different datasets on set anomaly detection tasks.
% \hspace{-1cm}
\begin{table}[h]
% \begin{wraptable}{r}{0.5\textwidth}
    % \vspace{-0.4in}
    \centering
    \caption{
        Empirical results of set anomaly detection Tasks}
    \label{tab:anomaly}
    \vspace{2mm}
    \resizebox{0.9\textwidth}{!}{%
    \begin{tabular}{l | llll}
        \toprule
        
        &
        \textbf{Double MNIST} &
         \textbf{CelebA} &
        \textbf{F-MNIST} &
         \textbf{CIFAR-10} 
        \\
        \midrule

         \multirow{1}{*}{\bf \textsc{Random}}  & 
         0.0816 &
         0.2187 &
         0.1930 &
         0.1926
         \\

         \multirow{1}{*}{\bf \textsc{PGM}}  & 
         0.300 $\pm$ 0.010 &
         0.481 $\pm$ 0.006 &
         0.540 $\pm$ 0.020 &
         0.450 $\pm$ 0.020 
         \\

         \multirow{1}{*}{\bf \textsc{DeepSet}}  & 
         0.111 $\pm$ 0.003 &
         0.440 $\pm$ 0.006 &
         0.490 $\pm$ 0.020 &
         0.320 $\pm$ 0.008
         \\

        \multirow{1}{*}{\bf \textsc{Set Transformer}}  & 
         0.512 $\pm$ 0.005 &
         0.527 $\pm$ 0.008 &
         0.581 $\pm$ 0.010 &
         0.650 $\pm$ 0.023 
         \\

         \multirow{1}{*}{\bf \textsc{EquiVSet}}  & 
         0.575 $\pm$ 0.018 &
         0.549 $\pm$ 0.005 &
         0.645 $\pm$ 0.010 &
         0.630 $\pm$ 0.012 
         \\

        \multirow{1}{*}{\bf \textsc{INSET}}  & 
         \textbf{0.707} $\pm$ \textbf{0.010}  &
         \textbf{0.580} $\pm$ \textbf{0.012}  &
         \textbf{0.710} $\pm$ \textbf{0.021}  &
         \textbf{0.712} $\pm$ \textbf{0.020}  
         \\

        \bottomrule

    \end{tabular}
    }
    % \vspace{-0.2in}
% \end{wraptable}
\end{table}
% \hspace{-1cm}
\end{document}